\newcommand{\norm}[1]{\lVert{#1}\rVert}
\newtheorem{thm}{Theorem}
\newtheorem{lem}{Lemma}
\newtheorem{prop}{Proposition}
\newtheorem{rem}{Remark}
\newtheorem{cor}{Corollary}
\newcommand{\bmat}[1]{\begin{bmatrix}#1\end{bmatrix}}
\newcommand{\tp}{\mathsf{T}}
\renewcommand{\epsilon}{\varepsilon}
\DeclareMathOperator{\trace}{trace}
\DeclareMathOperator{\vect}{vec}
\DeclareMathOperator{\diag}{diag}
\DeclareMathOperator{\sym}{sym}
\DeclareMathOperator{\real}{real}
\DeclareMathOperator{\imag}{imag}
\newcommand{\R}{\mathbb{R}}						
\title{Characterizing the Exact Behaviors of \\Temporal Difference Learning Algorithms Using \\Markov Jump Linear System Theory}
\author{%
  Bin ~Hu, \,\,\, \,\,\,\,Usman Ahmed Syed \\
  Department of Electrical and Computer Engineering\\
  Coordinated Science Laboratory\\
  University of Illinois at Urbana-Champaign\\
}
\begin{document}

\maketitle

\begin{abstract}
In this paper, we provide a unified analysis of temporal difference learning algorithms with linear function approximators by exploiting their connections to Markov jump linear systems (MJLS). We tailor the MJLS theory developed in the control community to characterize the exact behaviors of the first and second order moments of a large family of temporal difference learning algorithms. For both the IID and Markov noise cases, we show that the evolution of some augmented versions of the mean and covariance matrix of the TD estimation error exactly follows the trajectory of a deterministic linear time-invariant (LTI) dynamical system. Applying the well-known LTI system theory, we obtain closed-form expressions for the mean and covariance matrix of the TD estimation error at any time step. 
We provide a tight matrix spectral radius condition to guarantee the convergence of the covariance matrix of the TD estimation error, and perform a perturbation analysis to characterize the dependence of the TD behaviors on learning rate. For the IID case, we provide an exact formula characterizing how the mean and covariance matrix of the TD estimation error converge to the steady state values.
For the Markov case,  we use our formulas to explain how the behaviors of TD learning algorithms are affected by learning rate and the underlying Markov chain. For both cases,  upper and lower bounds for the mean square TD error are provided. The mean square TD error is shown to converge linearly to an exact limit.  
\end{abstract}

\section{Introduction}

Reinforcement learning (RL) has shown great promise in solving sequential decision making tasks \cite{bertsekas1996neuro, sutton2018reinforcement}.
One important topic for RL is policy evaluation whose objective is to evaluate the value function of a given policy.
A large family of temporal difference (TD) learning methods including standard TD, GTD, TDC, GTD2, DTD, and ATD \cite{sutton1988learning, sutton2008convergent, sutton2009fast, Niao2019}  have been developed
to solve the policy evaluation problem. These TD learning algorithms have become important building blocks for RL algorithms. See \cite{dann2014policy} for a comprehensive survey.
Despite the popularity of TD learning, the  behaviors of these algorithms have not been fully understood from a theoretical viewpoint. 
The standard ODE technique \cite{TsiRoy1997, borkar2000ode, bhatnagar2012stochastic,kushner2003stochastic, borkar2009stochastic} can only be used to prove asymptotic convergence. Finite sample bounds are challenging to obtain and typically developed in a case-by-case manner.
  Recently, there have been intensive research activities focusing on establishing finite sample bounds for TD learning methods with linear function approximations under various assumptions. The IID noise case is covered in \cite{dalal2018finite, lakshminarayanan2018linear, liu2015finite}. In \cite{bhandari2018finite}, the analysis is extended for a Markov noise model but an extra projection step in the algorithm is required. Very recently, finite sample bounds for the TD method  (without the projection step) under the Markov assumption have been obtained in \cite{srikant2019finite}. The bounds in \cite{srikant2019finite} actually work for any TD learning algorithm that can be modeled by a linear stochastic approximation scheme.
It remains unclear how tight these bounds are (especially for the large learning rate region). 
To complement the existing analysis results and techniques, we propose a general unified analysis framework for TD learning algorithms by borrowing the Markov jump linear system (MJLS) theory \cite{costa2006} from the controls literature. 
Our approach is inspired by a recent research trend in applying control theory for analysis of optimization algorithms~\cite{Lessard2014, hu2017SGD, Bin2017COLT, BinHu2017, fazlyab2017analysis, van2017fastest, cyrus2018robust, sundararajan2017robust, hu2017control, hu2018dissipativity, fazlyab2017dynamical, aybat2018robust, lessard2019direct, han2019systematic, aybat2019universally, dhingra2018proximal, nelson2018integral}, and extends the jump system perspective for finite sum optimization methods in \cite{Bin2017COLT} to TD learning.

 Our key insight is that TD learning algorithms with linear function approximations are essentially just Markov jump linear systems. Notice that a MJLS is described by a linear state space model whose
state/input matrices are functions of a jump parameter sampled from a finite state Markov chain. Since 
the behaviors of MJLS have been well established in the controls field \cite{costa2006, feng1992stochastic, abou1995solution,  chizeck1986discrete, costa1993stability, ji1988controllability, ji1990jump, el1996robust, Fang2002, seiler2003bounded}, we can borrow the analysis tools there to analyze TD learning algorithms in a more unified manner. Our main contributions are summarized as follows.
\begin{enumerate}
\item We present a unified Markov jump linear system perspective on a large family of TD learning algorithms including TD, TDC, GTD, GTD2, ATD, and DTD. Specifically,
we make the key observation that these methods are just MJLS subject to some prescribed input.
\item By tailoring the existing MJLS theory, we show that the evolution of some augmented versions of the mean and covariance matrix of the estimation error in all above TD learning methods exactly follows the trajectory of a deterministic linear time-invariant (LTI) dynamical system for both the IID and Markov noise cases. As a result, we obtain unified closed-form formulas for the mean and covariance matrix of the TD estimation error at any time step. 
\item We provide a tight matrix spectral radius condition to guarantee the convergence of the covariance matrix of the TD estimation error  under the general Markov assumption. By using the matrix perturbation theory \cite{moro1997lidskii, kato2013perturbation, avrachenkov2013analytic, gonzalez2015laurent},  we perform a perturbation analysis to show the dependence of the behaviors of TD learning on learning rate in a more explicit manner.  For the IID case, we provide an exact formula characterizing how the mean and covariance matrix of  the TD estimation error converge to the steady state values at a linear rate. For the Markov case, we use our formulas to explain how the behaviors of  TD learning algorithms are affected by learning rate and the underlying Markov chain. For both cases, we have shown that the mean square error of TD learning converges linearly to an exact limit. In addition, upper and lower bounds for the mean square error of TD learning are simultaneously derived.
\end{enumerate}

 We view our proposed analysis
as a complement rather than a replacement for existing analysis techniques.  The existing analysis focuses on upper bounds for the TD estimation error.
Our closed-form formulas provide both upper and lower bounds for the mean square error of TD learning. Our analysis also sheds light on the exact limit of the final TD error and the related convergence rate.
%

\section{Background}

\subsection{Notation}
The set of $m$-dimensional real vectors is denoted as $\R^m$. 
The Kronecker product of two matrices $A$ and $B$ is denoted by $A \otimes B$. Notice $(A\otimes B)^T=A^T \otimes B^T$ and $(A\otimes B)(C\otimes D)=(AC)\otimes (BD)$ when the matrices have compatible dimensions.  
Let $\vect$ denote the standard vectorization operation that stacks the columns of a matrix into a vector. We have $\vect(AXB)=(B^\tp \otimes A) \vect(X)$. Let $\sym$ denote the symmetrization operation, i.e. $\sym(A)=\frac{A^\tp+A}{2}$.
Let $\diag(H_i)$ denote a matrix whose $(i,i)$-th block is $H_i$ and all other blocks are zero. Given $H_i$ for $i=1, \ldots, n$, we have
\begin{align*}
\diag(H_i)=\bmat{H_1 & \ldots & 0\\ \vdots & \ddots & \vdots \\ 0 & \ldots & H_n}.
\end{align*}
A square matrix is Schur stable if all its eigenvalues have magnitude strictly less than $1$. A square matrix is Hurwitz if all its eigenvalues have strictly negative real parts.
The spectral radius of a matrix $H$ is denoted as $\sigma(H)$. The eigenvalue with the largest magnitude of $H$ is denoted as $\lambda_{\max}(H)$ and the eigenvalue with the largest real part of $H$ is denoted as $\lambda_{\max\real}(H)$.

\subsection{Linear time-invariant systems}
\label{sec:LTI}

A linear time-invariant (LTI) system is typically governed by the following state-space model
\begin{align}
\label{eq:lin1}
x^{k+1}=\mathcal{H} x^k+\mathcal{G} u^k,
\end{align}
where $x^k\in \R^{n_x}$, $u^k\in \R^{n_u}$, $\mathcal{H}\in \R^{n_x\times n_x}$, and $\mathcal{G}\in \R^{n_x\times n_u}$.
The LTI system theory has been well  documented in standard control textbooks \cite{hespanha2009, chen1998linear}. Here we briefly review several useful results.

%
%
%
%
%
%

\begin{itemize}
\item \textbf{Closed-form formulas for $x^k$:} Given an initial condition $x^0$ and an input sequence $\{u^k\}$, the sequence $\{x^k\}$ can be determined using the following closed-form expression
\begin{align}
\label{eq:res1}
x^k= (\mathcal{H})^k x^0+\sum_{t=0}^{k-1} (\mathcal{H})^{k-1-t} \mathcal{G} u^t,
\end{align}
where $(\mathcal{H})^k$ stands for the $k$-th power of the matrix $\mathcal{H}$.

\item \textbf{Necessary and sufficient stability condition:}  When $\mathcal{H}$ is Schur stable,  we know $(\mathcal{H})^k x^0\rightarrow 0$ for any arbitrary $x^0$. 
 When $\sigma(\mathcal{H})\ge 1$,  there always exists $x^0$ such that $(\mathcal{H})^k x^0$ does not converge to $0$.  When $\sigma(\mathcal{H})> 1$,  there even exists $x^0$ such that $(\mathcal{H})^k x^0 \rightarrow \infty$. 
See Section 7.2 in \cite{hespanha2009} for a detailed discussion.  A well-known result in the controls literature is that the LTI system \eqref{eq:lin1} is stable if and only if $\mathcal{H}$ is Schur stable.

\item\textbf{Exact limit for $x^k$:}  If $\mathcal{H}$ is Schur stable and $u^k$ converges to a limit $u^\infty$, then $x^k$ will converge to an exact limit. This is formalized as follows.
\begin{prop}
\label{prop:ltilim}
Consider the LTI system \eqref{eq:lin1}. If $\sigma(\mathcal{H})<1$ and $\lim_{k\rightarrow \infty}u^k=u^\infty$, then 
$\lim_{k\rightarrow \infty} x^k$ exists and we have $x^\infty=\lim_{k\rightarrow \infty} x^k=(I-\mathcal{H})^{-1}\mathcal{G}u^\infty$.
\end{prop}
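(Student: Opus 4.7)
The plan is to start from the closed-form expression \eqref{eq:res1}, namely
\begin{align*}
x^k = \mathcal{H}^k x^0 + \sum_{t=0}^{k-1} \mathcal{H}^{k-1-t} \mathcal{G} u^t,
\end{align*}
and take the limit of each of the two pieces. Since $\sigma(\mathcal{H})<1$, Gelfand's formula yields constants $C>0$ and $\rho \in (\sigma(\mathcal{H}),1)$ with $\|\mathcal{H}^k\|\le C\rho^k$ for all $k\ge 0$, so in particular $\mathcal{H}^k x^0 \to 0$. The same spectral condition guarantees that $1$ is not an eigenvalue of $\mathcal{H}$, so $(I-\mathcal{H})^{-1}$ exists and the Neumann series identity $\sum_{s=0}^{\infty}\mathcal{H}^s = (I-\mathcal{H})^{-1}$ holds.

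Next I would reindex the convolution sum by $s=k-1-t$ to get $\sum_{s=0}^{k-1}\mathcal{H}^s \mathcal{G} u^{k-1-s}$, and split using $u^{k-1-s} = u^\infty + (u^{k-1-s}-u^\infty)$. The constant piece gives $\left(\sum_{s=0}^{k-1}\mathcal{H}^s\right)\mathcal{G} u^\infty \to (I-\mathcal{H})^{-1}\mathcal{G} u^\infty$, which is precisely the claimed limit. It therefore suffices to show that the perturbation piece
\begin{align*}
r^k \defeq \sum_{s=0}^{k-1} \mathcal{H}^s \mathcal{G} (u^{k-1-s} - u^\infty)
\end{align*}
tends to zero.

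The bound on $r^k$ is the step that needs a little care, though it is essentially the standard input-to-state-stability argument. Fix $\epsilon>0$ and, using $u^k \to u^\infty$, choose $N$ so that $\|u^t - u^\infty\| \le \epsilon$ for all $t\ge N$; also let $M \defeq \sup_k \|u^k - u^\infty\| < \infty$, which is finite because $\{u^k\}$ converges hence is bounded. For $k>N$, split the sum at $s=k-1-N$:
\begin{align*}
\|r^k\| \le \sum_{s=0}^{k-1-N} \|\mathcal{H}^s\|\, \|\mathcal{G}\|\,\epsilon \;+\; \sum_{s=k-N}^{k-1} \|\mathcal{H}^s\|\, \|\mathcal{G}\|\, M \le \frac{C\|\mathcal{G}\|\,\epsilon}{1-\rho} \;+\; C\|\mathcal{G}\| M \frac{\rho^{k-N}}{1-\rho}.
\end{align*}
The second term vanishes as $k\to\infty$ (with $N$ fixed), so $\limsup_{k\to\infty}\|r^k\| \le C\|\mathcal{G}\|\,\epsilon/(1-\rho)$, and since $\epsilon$ is arbitrary we conclude $r^k\to 0$.

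Combining the three limits yields $x^k \to (I-\mathcal{H})^{-1}\mathcal{G} u^\infty$, as required. The only nontrivial ingredient is the split-and-bound estimate on $r^k$; everything else is a direct application of the Schur stability hypothesis and the Neumann series.
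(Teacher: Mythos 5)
Your proof is correct, but it takes a genuinely different route from the paper's. You work directly from the variation-of-constants formula \eqref{eq:res1}: Gelfand's formula gives $\|\mathcal{H}^k\|\le C\rho^k$, the Neumann series handles the constant part of the input, and the remainder $r^k$ is killed by the standard split-the-tail estimate (your indexing and the geometric bounds check out). The paper instead proves the statement via a Lyapunov argument: it first establishes Lemma \ref{lem:help1}, which constructs a positive definite $V$ and $\gamma>0$ with $(x^{k+1})^\tp V x^{k+1}\le \rho^2 (x^k)^\tp V x^k+\gamma\|u^k\|^2$ (via the discrete Lyapunov inequality $\mathcal{H}^\tp V\mathcal{H}-\rho^2 V<0$ and a Schur complement step), applies this recursion to the shifted error $x^k-x^\infty$, shows the resulting quadratic form is bounded, and concludes by taking $\limsup$ on both sides. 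The trade-off: your argument is more elementary and self-contained, needing only the spectral radius bound and a convolution estimate, and it produces the limit $(I-\mathcal{H})^{-1}\mathcal{G}u^\infty$ directly from the series rather than verifying it as a fixed point. The paper's Lyapunov machinery is heavier for this one statement but is reused verbatim to prove Propositions \ref{prop:ltiu} and \ref{prop:lti}, where the same inequality \eqref{eq:iter1} immediately yields the explicit linear rates $\sigma(\mathcal{H})+\epsilon$ and $\max\{\sigma(\mathcal{H})+\epsilon,\tilde\rho\}$; your convolution argument could also be pushed to give those rates, but would require redoing the tail estimate quantitatively in each case.
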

\item \textbf{Response for constant input:}  If $u^k=u$ $\forall k$ and $\sigma(\mathcal{H})<1$, then the closed-form expression for $x^k$ can be further simplified to give the following tight convergence rate result. 
\begin{prop}
\label{prop:ltiu}
Suppose $\sigma(\mathcal{H})<1$, and $x^k$ is determined by \eqref{eq:lin1}.  If $u^k=u$ $\forall k$, then $x^k$ converges to a limit point $x^\infty=\lim_{k\rightarrow \infty} x^k=(I-\mathcal{H})^{-1}\mathcal{G}u$. And 
  we can compute $x^k$ as 
\begin{align}
\label{eq:ucons}
x^k=x^\infty+(\mathcal{H})^k (x^0-x^\infty).
\end{align}
In addition, $\norm{x^k-x^\infty}\le C_0 (\sigma(\mathcal{H})+\epsilon)^k$ for some $C_0$ and any arbitrarily small $\epsilon>0$.
\end{prop}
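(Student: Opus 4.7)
The plan is to derive all three claims directly from the closed-form formula \eqref{eq:res1} combined with elementary facts about geometric series of matrices and the spectral radius.

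First, for the limit and the identity \eqref{eq:ucons}, I would specialize \eqref{eq:res1} to the constant input $u^t = u$, obtaining
\begin{align*}
x^k = \mathcal{H}^k x^0 + \Bigl(\sum_{t=0}^{k-1} \mathcal{H}^t\Bigr) \mathcal{G} u.
\end{align*}
Since $\sigma(\mathcal{H}) < 1$, the matrix $I - \mathcal{H}$ is invertible and the telescoping identity $(I - \mathcal{H})\sum_{t=0}^{k-1} \mathcal{H}^t = I - \mathcal{H}^k$ gives the partial-sum formula $\sum_{t=0}^{k-1} \mathcal{H}^t = (I-\mathcal{H})^{-1}(I - \mathcal{H}^k)$. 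Substituting this in and defining $x^\infty \defeq (I-\mathcal{H})^{-1} \mathcal{G} u$ yields
\begin{align*}
x^k = \mathcal{H}^k x^0 + (I - \mathcal{H}^k) x^\infty = x^\infty + \mathcal{H}^k (x^0 - x^\infty),
\end{align*}
which is exactly \eqref{eq:ucons}. The fact that $x^k \to x^\infty$ then follows immediately from this identity together with $\mathcal{H}^k \to 0$ (or alternatively just from Proposition \ref{prop:ltilim} applied with $u^\infty = u$).

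For the convergence rate bound, I would use a standard consequence of Gelfand's formula: for any square matrix $\mathcal{H}$ and any $\epsilon > 0$, there exists a submultiplicative matrix norm $\norm{\cdot}_\epsilon$ with $\norm{\mathcal{H}}_\epsilon \le \sigma(\mathcal{H}) + \epsilon$. Equivalence of norms on finite-dimensional spaces gives constants $c_1, c_2 > 0$ such that $c_1 \norm{v} \le \norm{v}_\epsilon \le c_2 \norm{v}$ for every $v$, hence
\begin{align*}
\norm{\mathcal{H}^k (x^0 - x^\infty)} \le \tfrac{1}{c_1} \norm{\mathcal{H}^k (x^0 - x^\infty)}_\epsilon \le \tfrac{c_2}{c_1} \norm{\mathcal{H}}_\epsilon^k \norm{x^0 - x^\infty} \le C_0 (\sigma(\mathcal{H}) + \epsilon)^k,
\end{align*}
with $C_0 \defeq (c_2/c_1)\norm{x^0 - x^\infty}$. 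An equivalent route is Jordan form: writing $\mathcal{H} = PJP^{-1}$, each entry of $\mathcal{H}^k$ is bounded by a sum of terms of the form $\binom{k}{j}|\lambda|^{k-j}$, and the polynomial prefactors are absorbed by the arbitrarily small $\epsilon$ using $k^m |\lambda|^k = o((|\lambda|+\epsilon)^k)$.

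The only mildly subtle step is the last one, since one might naively hope for a bound with $\sigma(\mathcal{H})^k$ and no $\epsilon$; this fails precisely when $\mathcal{H}$ has non-trivial Jordan blocks at eigenvalues of modulus $\sigma(\mathcal{H})$, which is why the arbitrarily small $\epsilon$ is needed. The first two claims are essentially bookkeeping on top of \eqref{eq:res1}.
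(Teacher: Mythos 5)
Your proposal is correct and follows essentially the same route as the paper: both start from the closed-form solution \eqref{eq:res1} with constant input, identify $x^\infty=(I-\mathcal{H})^{-1}\mathcal{G}u$ via the convergent Neumann series, and obtain \eqref{eq:ucons} by elementary algebra (the paper uses the one-step recursion $x^{k+1}-x^\infty=\mathcal{H}(x^k-x^\infty)$ where you telescope the partial geometric sum, which is the same computation). Your rate bound via Gelfand's formula and equivalent norms is interchangeable with the paper's Jordan-block discussion in its supplementary material, and you correctly identify why the arbitrarily small $\epsilon$ cannot in general be dropped.
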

From the above proposition, we can clearly see that now  $x^k$ is a sum of a constant steady state term $x^\infty$ and a matrix power term that decays at a linear rate specified by $\sigma(\mathcal{H})$ (see Section 2.2 in \cite{Lessard2014} for more explanations). The convergence rate characterized by $(\sigma(\mathcal{H})+\epsilon)$ is tight. More discussions on the tightness of this convergence rate are provided in the supplementary material.
\item \textbf{Response for exponentially shrinking input:} When $u^k$ itself converges at a linear rate $\tilde{\rho}$ and $\mathcal{H}$ is Schur stable,  $x^k$ will converge to its limit point at a linear rate specified by $\max\{\sigma(\mathcal{H})+\epsilon,\tilde{\rho}\}$. A formal statement is  provided as follows.
\begin{prop}
\label{prop:lti}
Suppose $\sigma(\mathcal{H})<1$, and $x^k$ is determined by \eqref{eq:lin1}. If $u^k$ converges to $u^\infty$ as $\norm{u^k-u^\infty}\le C \tilde{\rho}^k$, then
we  have $x^\infty=\lim_{k\rightarrow \infty} x^k=(I-\mathcal{H})^{-1}\mathcal{G}u^\infty$ and
$\norm{x^k-x^\infty}\le C_0 \left(\max\{\sigma(\mathcal{H})+\epsilon, \tilde{\rho}\}\right)^k$ for some $C_0$ and any arbitrarily small $\epsilon>0$.
\end{prop}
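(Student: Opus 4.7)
The plan is to reduce the problem to a setting already handled by the closed-form LTI solution by passing to error coordinates and then bounding a convolution of two geometric sequences. Existence of the limit and the identity $x^\infty=(I-\mathcal{H})^{-1}\mathcal{G}u^\infty$ follow immediately from Proposition~\ref{prop:ltilim}, so the real work is the rate bound. Introducing the error $e^k := x^k-x^\infty$ and using the fixed-point identity $x^\infty=\mathcal{H}x^\infty+\mathcal{G}u^\infty$ yields the shifted recursion $e^{k+1}=\mathcal{H}e^k+\mathcal{G}(u^k-u^\infty)$. Applying equation~\eqref{eq:res1} to this recursion, with input $v^k := u^k-u^\infty$, produces
\begin{align*}
e^k = \mathcal{H}^k e^0 + \sum_{t=0}^{k-1} \mathcal{H}^{k-1-t}\mathcal{G}v^t.
\end{align*}

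I would next invoke the standard spectral-radius estimate (a consequence of Gelfand's formula or the Jordan normal form): for every $\epsilon_1>0$ there exists $M<\infty$ with $\norm{\mathcal{H}^j}\le M(\sigma(\mathcal{H})+\epsilon_1)^j$ for all $j\ge 0$. Combined with the hypothesis $\norm{v^t}\le C\tilde\rho^t$ and the triangle inequality, this produces
\begin{align*}
\norm{e^k} \le M(\sigma(\mathcal{H})+\epsilon_1)^k\norm{e^0} + MC\norm{\mathcal{G}}\sum_{t=0}^{k-1}(\sigma(\mathcal{H})+\epsilon_1)^{k-1-t}\tilde\rho^t.
\end{align*}
Writing $\rho := \sigma(\mathcal{H})+\epsilon_1$, the convolution sum equals $(\rho^k-\tilde\rho^k)/(\rho-\tilde\rho)$ when $\rho\neq\tilde\rho$ and $k\rho^{k-1}$ when $\rho=\tilde\rho$, so in every case it is bounded by a constant multiple of $k\,(\max\{\rho,\tilde\rho\})^k$.

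To reach the clean form stated in the proposition I would absorb the polynomial $k$ into an arbitrarily small exponential slack: given the target $\epsilon>0$, choose $\epsilon_1<\epsilon$ so that $\max\{\rho,\tilde\rho\}\le\max\{\sigma(\mathcal{H})+\epsilon,\tilde\rho\}$, then use the elementary estimate $k\,r^k\le C_0(r+\delta)^k$ (valid for any $\delta>0$ and an appropriate $C_0$) to fold the linear factor into the exponential base. Consolidating constants yields the claimed bound $\norm{x^k-x^\infty}\le C_0(\max\{\sigma(\mathcal{H})+\epsilon,\tilde\rho\})^k$. The main obstacle is precisely the borderline regime $\tilde\rho\approx\sigma(\mathcal{H})+\epsilon_1$, where the convolution legitimately produces a genuine $k$ factor; the $\epsilon$ slack on the spectral-radius side of the final bound is exactly what allows this polynomial overhead to be reabsorbed. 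Everything else reduces to the closed-form LTI solution and routine geometric-series manipulations, so this is the only place that requires care.
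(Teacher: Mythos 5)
Your proof is correct, but it follows a genuinely different route from the paper's. You both start from the same shifted recursion $x^{k+1}-x^\infty=\mathcal{H}(x^k-x^\infty)+\mathcal{G}(u^k-u^\infty)$ and both reduce the limit identity to Proposition~\ref{prop:ltilim}; the divergence is in how the homogeneous dynamics are controlled. The paper constructs a quadratic Lyapunov function: a positive definite $V$ with $\mathcal{H}^\tp V\mathcal{H}-\rho^2V<0$ (via the discrete Lyapunov inequality and a Schur-complement step) yields the one-step dissipation inequality $(x^{k+1}-x^\infty)^\tp V(x^{k+1}-x^\infty)\le\rho^2(x^k-x^\infty)^\tp V(x^k-x^\infty)+\gamma\norm{u^k-u^\infty}^2$, which is then iterated and the same geometric convolution evaluated; the borderline case $\rho=\tilde\rho$ is sidestepped by shrinking $\epsilon$ so that $\sigma(\mathcal{H})+\epsilon\neq\tilde\rho$. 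You instead apply the variation-of-constants formula \eqref{eq:res1} to the error recursion and invoke $\norm{\mathcal{H}^j}\le M(\sigma(\mathcal{H})+\epsilon_1)^j$ (Gelfand/Jordan), then absorb the polynomial factor from the degenerate convolution into the $\epsilon$ slack. Your approach is more elementary and self-contained; the paper's buys a reusable one-step inequality (its Lemma on $V$) that also powers the proof of Proposition~\ref{prop:ltilim}. One small imprecision: your blanket bound of the convolution by a multiple of $k\left(\max\{\rho,\tilde\rho\}\right)^k$ followed by folding the $k$ into the base would overshoot $\max\{\sigma(\mathcal{H})+\epsilon,\tilde\rho\}^k$ when $\tilde\rho$ is the strict maximum; but in that regime $\rho\neq\tilde\rho$ and the sum carries no $k$ factor at all, while in the only case where the $k$ genuinely appears ($\rho=\tilde\rho=\sigma(\mathcal{H})+\epsilon_1<\sigma(\mathcal{H})+\epsilon$) there is automatic room to absorb it, so the argument closes once you make that case split explicit.
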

\end{itemize}

The results in Propositions \ref{prop:ltilim}, \ref{prop:ltiu}, and \ref{prop:lti} are well known in the control community. For completeness, we will include their proofs in the supplementary material.

\subsection{Markov jump linear systems}
Another important class of dynamic systems that have been extensively studied in the controls literature is the so-called Markov jump linear system (MJLS) \cite{costa2006}. 
Let $\{z^k\}$ be a Markov chain sampled from a finite state space $\mathcal{S}$.
A MJLS is governed by the following state-space model:
\begin{align}
\label{eq:jump1}
\xi^{k+1}=H(z^k) \xi^k+G(z^k) y^k,
\end{align}
where $H(z^k)$ and $G(z^k)$ are matrix functions of $z^k$.  Here, $\xi^k$ is the state, and $y^k$ is the input. 
There is a one-to-one mapping from $\mathcal{S}$ to the set $\mathcal{N}:=\{1,2,\ldots,n\}$ where $n=|\mathcal{S}|$. We can assume $H(z^k)$ is sampled from a set of matrices $\{H_1, H_2, \ldots, H_n\}$ and $G(z^k)$ is sampled from $\{G_1, G_2, \ldots, G_n\}$. We have $H(z^k)=H_i$ and $G(z^k)=G_i$ when $z^k=i$.
The MJLS theory has been well developed in the controls community \cite{costa2006}. We will apply the MJLS theory to analyze TD learning algorithms.



\section{A general Markov jump system perspective for TD learning}
\label{sec:JumpFor}


In this section, we provide a general jump system perspective for TD learning with linear function approximations.
Notice that many TD learning algorithms including TD, TDC, GTD, GTD2, A-TD, and D-TD  can be modeled by the following linear stochastic recursion:
\begin{align}
\label{eq:m0}
\xi^{k+1}=\xi^k+\alpha \left( A(z^k)\xi^k+b(z^k)\right),
\end{align}
where $\{z^k\}$ forms a finite state Markov chain and  $b(z^k)$ satisfies $\lim_{k\rightarrow \infty}\mathbb{E} b(z^k)=0$.\footnote{This standard assumption is typically related to the projected Bellman equation and can always be enforced by a shifting argument. More explanations are provided in Remark \ref{rem:A1}.} We have $A(z^k)=A_i$ and $b(z^k)=b_i$ when $z^k=i$.
For simplicity, we mainly focus on analyzing \eqref{eq:m0}. Other models including two time-scale schemes \cite{gupta2019finite,xu2019two} will be discussed in the supplementary material.

Our key observation  is that \eqref{eq:m0} can be rewritten as the following MJLS
\begin{align}
\label{eq:m1}
\xi^{k+1}=(I+\alpha A(z^k))\xi^k+\alpha b(z^k).
\end{align}
The above model is a special case of \eqref{eq:jump1} if we set $H(z^k)=I+\alpha A(z^k)$, $G(z^k)=\alpha b(z^k)$, and $y^k=1$ $\forall k$.
Consequently, many TD learning algorithms can be analyzed using the MJLS theory. 

We briefly highlight the analysis idea here.
 Our analysis is built upon the fact that some augmented versions of the mean and the covariance matrix of $\{\xi^k\}$ for the MJLS model \eqref{eq:jump1} actually follow the dynamics of a deterministic LTI model in the form of~\eqref{eq:lin1} \cite[Chapter 3]{costa2006}.  To see this, we denote the transition probabilities for the Markov chain $\{z^k\}$  as $p_{ij}:=\mathbb{P}(z^{k+1}=j\vert z^k=i)$ and specify the transition matrix $P$ by setting its $(i,j)$-th entry to be $p_{ij}$. Obviously, we have $p_{ij}\ge 0$ and $\sum_{j=1}^n p_{ij}=1$ for all $i$.
Next, the indicator function $\textbf{1}_{\{z^k=i\}}$ is defined as  $\textbf{1}_{\{z^k=i\}}=1$ if $z^k=i$ and $\textbf{1}_{\{z^k=i\}}=0$ otherwise. Now we define the following key quantities:
\begin{align*}
q_i^k=\mathbb{E}\left(\xi^k \textbf{1}_{\{z^k=i\}}\right),\,\,\,\,Q_i^k=\mathbb{E}\left( \xi^k (\xi^k)^\tp\textbf{1}_{\{z^k=i\}}\right).
\end{align*} 
Suppose  $y^k=1$ $\forall k$.
Based on \cite[Proposition 3.35]{costa2006}, $q^k$ and $Q^k$  can be iteratively calculated as
\begin{align}
\label{eq:q1S}
q_j^{k+1}&=\sum_{i=1}^n p_{ij} (H_i q_i^k+ G_i p_i^k),\\
\label{eq:co1S}
Q_j^{k+1}&=\sum_{i=1}^n p_{ij} \left(H_i Q_i^k H_i^\tp+2\sym(H_i q_i^k G_i^\tp)+ p_i^k G_i G_i^\tp\right),
\end{align}
where $p_i^k:=\mathbb{P}(z^k=i)$.
If we further augment $q_i^k$ and $Q_i^k$ as
\begin{align*}
q^k=\bmat{q_1^k \\ \vdots \\ q_n^k}, \,\,\,\,Q^k=\bmat{Q_1^k & Q_2^k & \ldots & Q_n^k},
\end{align*}
then it is straightforward to rewrite \eqref{eq:q1S}  \eqref{eq:co1S} as the following LTI system
\begin{align}
\label{eq:jumpGeneral}
\bmat{q^{k+1} \\ \vect(Q^{k+1})}=\bmat{ \mathcal{H}_{11}& 0 \\ \mathcal{H}_{21} & \mathcal{H}_{22}}\bmat{q^k \\ \vect(Q^k)}+\bmat{u_q^k \\ u_Q^k},
\end{align} 
where $\mathcal{H}_{11}$, $\mathcal{H}_{21}$, $\mathcal{H}_{22}$, $u_q^k$, and $u_Q^k$ are given by
\begin{align}
\begin{split}
\mathcal{H}_{11}&= \bmat{p_{11} H_1 & \ldots & p_{n1} H_n\\ \vdots & \ddots & \vdots\\ p_{1n} H_1 & \ldots & p_{nn} H_n},\mathcal{H}_{22}= \bmat{p_{11} H_1\otimes H_1 & \ldots & p_{n1} H_n\otimes H_n\\ \vdots & \ddots & \vdots\\ p_{1n} H_1\otimes H_1 & \ldots & p_{nn} H_n\otimes H_n},\\
\mathcal{H}_{21}&=\bmat{p_{11} (H_1\otimes G_1+G_1\otimes H_1)  & \ldots &  p_{n1} (H_n\otimes G_n+G_n\otimes H_n),  \\ \vdots & \ddots & \vdots \\  p_{1n} (H_1\otimes G_1+G_1\otimes H_1) &  \ldots & p_{nn} (H_n\otimes G_n+G_n\otimes H_n) },\\
u_q^k&= \bmat{p_{11} G_1  & \ldots &  p_{n1} G_n  \\ \vdots & \ddots & \vdots \\  p_{1n} G_1 &  \ldots & p_{nn} G_n }\bmat{p_1^k I_{n_\xi}\\ \vdots \\ p_n^k I_{n_\xi}}, u_Q^k= \bmat{p_{11} G_1\otimes G_1  & \ldots &  p_{n1} G_n\otimes G_n  \\ \vdots & \ddots & \vdots \\  p_{1n} G_1\otimes G_1 &  \ldots & p_{nn} G_n\otimes G_n }\bmat{p_1^k I_{n_\xi^2}\\ \vdots \\ p_n^k I_{n_\xi^2}}.
\end{split}
 \end{align}
A detailed derivation for the above result is presented in the supplementary material.
A key implication here is that $q^k$ and $\vect(Q^k)$ follow the LTI dynamics \eqref{eq:jumpGeneral} and can be analyzed using the standard LTI theory reviewed in Section \ref{sec:LTI}.
Obviously, we have $\mathbb{E} \xi^k=\sum_{i=1}^n q_i^k$, $\mathbb{E} \left(\xi^k (\xi^k)^\tp \right)=\sum_{i=1}^n Q_i^k$, and $\mathbb{E} \norm{\xi^k}^2=\trace(\sum_{i=1}^n Q_i^k)=(\mathbf{1}_n^\tp \otimes \vect(I_{n_\xi})^\tp)\vect(Q^k)$. Hence the mean, covariance, and mean square norm of $\xi^k$ can all be calculated using closed-form expressions.
We will present a detailed analysis for \eqref{eq:m1} and provide related implications for TD learning in the next two sections.

For illustrative purposes, we explain the jump system perspective for the standard TD method.  

\paragraph{Example 1: TD method.} The standard TD method (or TD(0)) uses the following update rule:
\begin{align}
\label{eq:TD0}
\theta^{k+1}&=\theta^k-\alpha \phi(s^k)\left((\phi(s^k)-\gamma\phi(s^{k+1}))^\tp \theta^k-r(s^k)\right),
\end{align}
where $\{s^k\}$ is the underlying Markov chain, $\phi$ is the feature vector, $r$ is the reward, $\gamma$ is the discounting factor, and $\theta^k$ is the weight vector to be estimated.
Suppose $\theta^*$ is the vector that solves the projected Bellman equation.
We can set $z^k=\bmat{(s^{k+1})^\tp & (s^k)^\tp}^\tp$ and then rewrite the TD update as
\begin{align}
\label{eq:TD}
\theta^{k+1}-\theta^*=\left(I+\alpha A(z^k) \right)(\theta^k-\theta^*)+\alpha b(z^k),
\end{align}
where $A(z^k)=\phi(s^k)(\gamma\phi(s^{k+1})-\phi(s^k))^\tp$ and $b(z^k)= \phi(s^k) \left(r(s^k)+(\phi(s^k)-\gamma\phi(s^{k+1}))^\tp \theta^*\right)$. Suppose $\lim_{k\rightarrow\infty} p_i^k=p_i^\infty$.
Since the projected Bellman equation and the equation $\sum_{i=1}^n p_i^\infty b_i=0$ are actually equivalent, we have naturally enforced  $\lim_{k\rightarrow \infty}\mathbb{E} b(z^k)=0$.
Therefore, the TD update can be modeled as \eqref{eq:m1} with $b(z^k)$ satisfying $\lim_{k\rightarrow \infty}\mathbb{E} b(z^k)=0$.
See Section 3.1 in \cite{srikant2019finite} for a similar formulation.
Now we can apply the MJLS theory and the LTI model \eqref{eq:jumpGeneral} to analyze the covariance $\mathbb{E} \left((\theta^k-\theta^*)(\theta^k-\theta^*)^\tp\right)$ and the mean square error $\mathbb{E} \norm{\theta^k-\theta^*}^2$. In this case, we have 
\begin{align*}
q^k=\bmat{\mathbb{E}\left((\theta^k-\theta^*) \textbf{1}_{\{z^k=1\}}\right) \\ \vdots \\ \mathbb{E}\left((\theta^k-\theta^*) \textbf{1}_{\{z^k=n\}}\right) },\,\,\,\,\vect(Q^k)=\bmat{\vect\left(\mathbb{E}( (\theta^k-\theta^*) (\theta^k-\theta^*)^\tp\textbf{1}_{\{z^k=1\}})\right)\\ \vdots \\ \vect\left(\mathbb{E}( (\theta^k-\theta^*) (\theta^k-\theta^*)^\tp\textbf{1}_{\{z^k=n\}})\right)}.
\end{align*} 
Then we can easily analyze $q^k$ and $Q^k$ by applying the LTI model \eqref{eq:jumpGeneral}. 
In general, the covariance matrix $\mathbb{E}\left( (\theta^k-\theta^*) (\theta^k-\theta^*)^\tp\right)$ and the mean value $\mathbb{E}(\theta^k-\theta^*)$ do not directly follow an LTI system. However, when working with the augmented covariance matrix $Q^k$ and the augmented mean value vector $q^k$, we do obtain an LTI model in the form of \eqref{eq:lin1}. Once the closed-form expression for $Q^k$ is obtained, the mean square estimation error for the TD update can be immediately calculated as $\mathbb{E}\norm{\theta-\theta^*}^2=\trace(\sum_{i=1}^n Q_i^k)=(\mathbf{1}_n^\tp \otimes \vect(I_{n_\theta})^\tp)\vect(Q^k)$.

Here we omit the detailed formulations for other TD learning methods. The key message is that $\{z^k\}$ can be viewed as a jump parameter and TD learning methods are essentially just MJLS.
Notice that all the TD learning algorithms that can be analyzed using the ODE method are in the form of~\eqref{eq:m1}. Jump system perspectives for other TD learning algorithms are discussed in the supplementary material.

\begin{rem}[Assumptions]
\label{rem:A1}
Denote $\bar{A}=\lim_{k\rightarrow \infty}\mathbb{E}A(z^k)=\sum_{i=1}^n p_i^\infty A_i$.
In this paper, we will assume $\bar{A}$  is Hurwitz. This assumption is standard and even required by the basic ODE approach. For the standard TD method, $\bar{A}$ is Hurwitz when the discount factor is smaller than $1$, $p_i^\infty$ is positive for all $i$,  and the feature matrix is full column rank. It is worth emphasizing that the assumption $\lim_{k\rightarrow \infty}\mathbb{E}b(z^k)=0$ is also general.
Suppose $\sum_{i=1}^n p_i^\infty b_i\neq 0$. This case can still be handled using a shifting argument since $\bar{A}$ is Hurwitz. Notice
the iteration $\xi^{k+1}=(I+\alpha A(z^k))\xi^k+\alpha b(z^k)$ can be rewritten as $\xi^{k+1}-\tilde{\xi}=\xi^k-\tilde{\xi}+\alpha\left( A(z^k) (\xi^k-\tilde{\xi})+A(z^k) \tilde{\xi}+b(z^k)\right)$ for any $\tilde{\xi}$.  Now we denote $\tilde{b}_i=A_i \tilde{\xi}+b_i$ and the above iteration just becomes $\xi^{k+1}-\tilde{\xi}=(I+\alpha A(z^k))(\xi^k-\tilde{\xi})+\alpha \tilde{b}(z^k)$.
When $\bar{A}$ is Hurwitz (and hence invertible), we can choose $\tilde{\xi}=-(\sum_{i=1}^n p_i^\infty A_i)^{-1} (\sum_{i=1}^n p_i^\infty b_i)$ such that 
 $\sum_{i=1}^n p_i^\infty \tilde{b}_i=\sum_{i=1}^n p_i^\infty (A_i \tilde{\xi}+b_i)=0$. 
\end{rem}

\begin{rem}[Generality of \eqref{eq:jump1}]  Notice that \eqref{eq:jump1} provides a general jump system model for linear stochastic schemes that can be in more complicated forms than \eqref{eq:m0}. However, \eqref{eq:jump1} can not be directly used to cover nonlinear stochastic approximation schemes. See \cite{wang2019multistep, zaiwei2019} for recent finite sample results on nonlinear stochastic approximation over non-IID data.
\end{rem}

%
%


%

\section{Analysis under the IID assumption}

For illustrative purposes, we first present the analysis for \eqref{eq:m1} under the IID assumption ($\mathbb{P}(z^k=i)=p_i$ $\forall i$).  In this case, the analysis is significantly simpler, since $\{\mathbb{E}\xi^k\}$ and $\{\mathbb{E} \left(\xi^k (\xi^k)^\tp \right)\}$ directly form LTI systems with much smaller dimensions. 
We denote $\mu^k:=\mathbb{E} \xi^k$ and $\mathbb{Q}^k:=\mathbb{E} \left(\xi^k (\xi^k)^\tp \right)$.
Then the following equations hold for the general jump system model \eqref{eq:jump1}
\begin{align}
\begin{split}
\label{eq:IID1}
\mu^{k+1}&=\sum_{i=1}^n p_i (H_i \mu^k+G_i)=\bar{H} \mu^k+\bar{G},\\
\vect(\mathbb{Q}^{k+1})&=(\sum_{i=1}^n p_i H_i\otimes H_i) \vect(\mathbb{Q}^k)+\left(\sum_{i=1}^n p_i (H_i\otimes G_i+G_i\otimes H_i) \right) \mu^k+\sum_{i=1}^n p_i G_i\otimes G_i.
\end{split}
\end{align}
There are many ways to derive the above formulas. One way is to first show $q_i^k=p_i \mu^k$ and $Q_i^k=p_i \mathbb{Q}^k$ in this case and then apply \eqref{eq:q1S} and \eqref{eq:co1S}.  Another way is to directly modify the proof of Theorem \ref{thm:IID}.
Now consider the jump system model \eqref{eq:m1} under the assumption $\mathbb{E} b(z^k)=\sum_{i=1}^n p_i b_i=0$.
In this case, we have $H_i=I+\alpha A_i$, $G_i=\alpha b_i$, and $y^k=1$. 
Denote $\bar{A}:=\sum_{i=1}^n p_i A_i$. We can directly obtain the following result.
\begin{thm}
\label{thm:IID}
Consider the jump system model \eqref{eq:m1} with $H_i=I+\alpha A_i$, $G_i=\alpha b_i$, and $y^k=1$.  Suppose $\{z^k\}$ is sampled from $\mathcal{N}$ using an IID distribution $\mathbb{P}(z^k=i)=p_i$. In addition, assume $\sum_{i=1}^n p_i b_i=0$. Then $\mu^k$ and $\vect(\mathbb{Q}^k)$ are governed by the following LTI system:
\begin{align}
\label{eq:LTIIID}
\bmat{\mu^{k+1} \\ \vect(\mathbb{Q}^{k+1})}=\bmat{ \mathcal{H}_{11}& 0 \\ \mathcal{H}_{21} & \mathcal{H}_{22}}\bmat{\mu^k \\ \vect(\mathbb{Q}^k)}+\bmat{0\\ \alpha^2 \sum_{i=1}^n p_i(b_i\otimes b_i)},
\end{align} 
where $\mathcal{H}_{11}$, $\mathcal{H}_{21}$ and $\mathcal{H}_{22}$ are determined as
\begin{align}
\label{eq:HIID}
\begin{split}
\mathcal{H}_{11}&=I+\alpha \bar{A},\\
\mathcal{H}_{21}&=\alpha^2 \sum_{i=1}^n p_i (A_i\otimes b_i+b_i\otimes A_i),\\
\mathcal{H}_{22}&=I_{n_\xi^2}+\alpha (I\otimes \bar{A}+\bar{A}\otimes I)+\alpha^2 \sum_{i=1}^n p_i( A_i\otimes A_i).
 \end{split}
 \end{align}
In addition, if $\sigma(\mathcal{H}_{22})<1$,  we have
\begin{align}
\label{eq:resIIDF}
\begin{split}
\bmat{\mu^k \\ \vect(\mathbb{Q}^k)}&=\left(\bmat{\mathcal{H}_{11} & 0 \\ \mathcal{H}_{21} & \mathcal{H}_{22}}\right)^k\left(\bmat{\mu^0 \\ \vect(\mathbb{Q}^0)}-\bmat{\mu^\infty\\ \vect(\mathbb{Q}^\infty)}\right)+\bmat{\mu^\infty\\ \vect(\mathbb{Q}^\infty)}
\end{split}
\end{align}
where $\mu^\infty=\lim_{k\rightarrow \infty} \mu^k=0$, and $\vect(\mathbb{Q}^\infty)$ is given as
\begin{align}
\label{eq:resIIDQ}
\vect(\mathbb{Q}^\infty)=\lim_{k\rightarrow 0} \vect(\mathbb{Q}^k)= -\alpha\left(I\otimes \bar{A}+\bar{A}\otimes I+\alpha \sum_{i=1}^n p_i( A_i\otimes A_i)\right)^{-1}\left(\sum_{i=1}^n  p_i(b_i\otimes b_i)\right)
\end{align}
\end{thm}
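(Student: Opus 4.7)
The plan is to derive the mean and second-moment recursions from \eqref{eq:m1} by direct expectation and Kronecker algebra, then invoke Proposition \ref{prop:ltiu} to obtain the closed form. First I would take expectation of \eqref{eq:m1}; since $z^k$ is independent of $\xi^k$ under the IID assumption, linearity and the hypothesis $\sum_i p_i b_i=0$ give
\begin{equation*}
\mu^{k+1} = \sum_{i=1}^n p_i(I+\alpha A_i)\mu^k + \alpha\sum_{i=1}^n p_i b_i = (I+\alpha\bar{A})\mu^k,
\end{equation*}
identifying $\mathcal{H}_{11}$ and showing the top block of \eqref{eq:LTIIID} has zero forcing.

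For the second moment, I would expand $\xi^{k+1}(\xi^{k+1})^{\tp}$ into the quadratic piece $(I+\alpha A(z^k))\xi^k(\xi^k)^{\tp}(I+\alpha A(z^k))^{\tp}$, two symmetric cross pieces involving $\xi^k$ and $b(z^k)$, and $\alpha^2 b(z^k)b(z^k)^{\tp}$. Conditioning on $z^k$, averaging with weights $p_i$, and vectorizing via $\vect(CXD^{\tp})=(D\otimes C)\vect(X)$ handles each piece separately. Expanding $(I+\alpha A_i)\otimes(I+\alpha A_i)=I+\alpha(I\otimes A_i+A_i\otimes I)+\alpha^2 A_i\otimes A_i$ and averaging produces exactly $\mathcal{H}_{22}$. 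For the cross term the vectorized coefficient is $\alpha\sum_i p_i\bigl[(I+\alpha A_i)\otimes b_i+b_i\otimes(I+\alpha A_i)\bigr]$; here the order-$\alpha$ contributions $b_i\otimes I+I\otimes b_i$ average to zero by $\sum_i p_i b_i=0$, leaving $\alpha^2\sum_i p_i(A_i\otimes b_i+b_i\otimes A_i)=\mathcal{H}_{21}$ multiplying $\mu^k$. The final piece contributes the constant $\alpha^2\sum_i p_i(b_i\otimes b_i)$, yielding \eqref{eq:LTIIID}.

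The state matrix in \eqref{eq:LTIIID} is block lower triangular, so its spectrum is the union of the spectra of $\mathcal{H}_{11}$ and $\mathcal{H}_{22}$. To apply Proposition \ref{prop:ltiu} under the sole hypothesis $\sigma(\mathcal{H}_{22})<1$, I would verify $\sigma(\mathcal{H}_{11})<1$ via a covariance-dominates-mean argument: for the noiseless recursion ($b_i\equiv 0$) started from an arbitrary deterministic $\xi^0=\mu^0$, $\mathbb{Q}^k-\mu^k(\mu^k)^{\tp}=\cov(\xi^k)\succeq 0$, so $\norm{\mu^k}^2\le\trace(\mathbb{Q}^k)$; since $\sigma(\mathcal{H}_{22})<1$ forces $\vect(\mathbb{Q}^k)\to 0$ and $\mu^k=(I+\alpha\bar{A})^k\mu^0$ for every $\mu^0$, the matrix $\mathcal{H}_{11}=I+\alpha\bar{A}$ must be Schur. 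Proposition \ref{prop:ltiu} then delivers \eqref{eq:resIIDF}, and the steady-state identity $x^\infty=(I-\mathcal{H})^{-1}\mathcal{G}u$ solved block by block gives $\mu^\infty=0$ from the top block and $(I-\mathcal{H}_{22})\vect(\mathbb{Q}^\infty)=\alpha^2\sum_i p_i(b_i\otimes b_i)$ from the bottom, which becomes \eqref{eq:resIIDQ} after factoring $-\alpha$ from $I-\mathcal{H}_{22}$. The Kronecker bookkeeping in the cross term is the main algebraic step, but the subtler technical point is precisely this small spectral implication $\sigma(\mathcal{H}_{22})<1\Rightarrow\sigma(\mathcal{H}_{11})<1$, which is what legitimizes applying Proposition \ref{prop:ltiu} to the full block-triangular system.
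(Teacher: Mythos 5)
Your derivation of the recursions for $\mu^k$ and $\vect(\mathbb{Q}^k)$ — conditional expectation, the Kronecker expansion of $(I+\alpha A_i)\otimes(I+\alpha A_i)$, the cancellation of the order-$\alpha$ cross terms via $\sum_i p_i b_i=0$, and the vectorization bookkeeping — matches the paper's proof step for step, and the final application of Proposition \ref{prop:ltiu} with the block-triangular inversion yielding \eqref{eq:resIIDQ} is likewise the same. The one place you genuinely diverge is the implication $\sigma(\mathcal{H}_{22})<1\Rightarrow\sigma(\mathcal{H}_{11})<1$, which is needed before Proposition \ref{prop:ltiu} can be applied to the full system: the paper simply cites Proposition 3.6 of \cite{costa2006} (mean-square stability of the MJLS implies mean stability), whereas you give a self-contained probabilistic argument — set $b_i\equiv 0$, note that $\mathcal{H}_{11}$ and $\mathcal{H}_{22}$ do not depend on the $b_i$, start from a deterministic $\xi^0$, and use $\norm{\mu^k}^2\le\trace(\mathbb{Q}^k)\rightarrow 0$ together with $\mu^k=(I+\alpha\bar{A})^k\mu^0$ for arbitrary $\mu^0$ to conclude $I+\alpha\bar{A}$ is Schur. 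That argument is correct and is arguably preferable here: it keeps the proof free of an external MJLS reference and makes transparent why second-moment stability dominates first-moment stability, at the cost of a short extra paragraph that the citation avoids.
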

\begin{proof}
 For completeness, a detailed proof is  presented in the supplementary material.
\end{proof}
Now we discuss the implications of the above theorem for TD learning. For simplicity, we denote $\mathcal{H}=\bmat{\mathcal{H}_{11} & 0 \\ \mathcal{H}_{21} & \mathcal{H}_{22}}$.

\paragraph{Stability condition for TD learning.} From the LTI theory, the system \eqref{eq:LTIIID} is stable if and only if $\mathcal{H}$ is Schur stable. We can apply Proposition 3.6 in \cite{costa2006} to show that $\mathcal{H}$ is Schur stable if and only if $\mathcal{H}_{22}$ is Schur stable. Hence, a necessary and sufficient stability condition for the LTI system \eqref{eq:LTIIID} is that $\mathcal{H}_{22}$ is Schur stable.
Under this condition,   the first term on the right side of \eqref{eq:resIIDF} converges to $0$ at a linear rate specified by $\sigma(\mathcal{H})$, and the second term on the right side of \eqref{eq:resIIDF} is a constant matrix quantifying the steady state covariance.
An important question for TD learning is how to choose $\alpha$ such that $\sigma(\mathcal{H}_{22})<1$ for some given $\{A_i\}$, $\{b_i\}$, and $\{p_i\}$. We provide some clue to this question by applying an eigenvalue perturbation analysis to the matrix $\mathcal{H}_{22}$. We assume $\alpha$ is small. Then under mild technical condition\footnote{One such condition is that $\lambda_{\max}(I_{n_\xi^2}+\alpha(I\otimes \bar{A}+\bar{A}\otimes I))$ is a semisimple eigenvalue.}, we can ignore the quadratic term $\alpha^2 \sum_{i=1}^n p_i( A_i\otimes A_i)$ in the expression of  $\mathcal{H}_{22}$ and use $\lambda_{\max}(I_{n_\xi^2}+\alpha (I\otimes \bar{A}+\bar{A}\otimes I))$ to estimate $\lambda_{\max}(\mathcal{H}_{22})$. We have
\begin{align}
\lambda_{\max}(\mathcal{H}_{22})= 1+2\lambda_{\max\real}(\bar{A})\alpha+O(\alpha^2).
\end{align}
Then we immediately obtain $\sigma(\mathcal{H}_{22})\approx 1+2\real(\lambda_{\max\real}(\bar{A}))\alpha+O(\alpha^2)$. Therefore, as long as $\bar{A}$ is Hurwitz, there exists sufficiently small $\alpha$ such that $\sigma(\mathcal{H}_{22})<1$.  
More details of the perturbation analysis are provided in the supplementary material.

\paragraph{Exact limit for the mean square error of TD learning.} Obviously, $\mu^k$ converges to $0$ at the rate specified by $\sigma(I+\alpha \bar{A})$ due to the relation $\mu^k=(I+\alpha \bar{A})^k \mu^0$.
Applying Proposition \ref{prop:lti} and making use of the block structure in $\mathcal{H}$, one can show $\vect(\mathbb{Q}^\infty)=\alpha^2 (I_{n_\xi^2}-\mathcal{H}_{22})^{-1}\left(\sum_{i=1}^n p_i(b_i\otimes b_i)\right)$, which leads to the result in \eqref{eq:resIIDQ}. A key message here is that the covariance matrix converges linearly to an exact limit under the stability condition $\sigma(\mathcal{H}_{22})<1$.
We can clearly see $\lim_{k\rightarrow 0} \vect(\mathbb{Q}^k)=O(\alpha)$ and can be controlled by decreasing $\alpha$.
When $\alpha$ is large, we need to keep the quadratic term $\alpha \sum_{i=1}^n p_i( A_i\otimes A_i)$. Therefore, our theory captures the steady-state behavior of TD learning with both small and large $\alpha$, and complement the existing finite sample bounds in literatures.  To further compare our results with existing finite sample bounds, we obtain the following result for the mean square error of TD learning.

\begin{cor}
Consider the TD update \eqref{eq:TD} with $\bar{A}$ being Hurwitz. Suppose $\sigma(\mathcal{H}_{22})<1$ and $\mathbb{P}(z^k=i)=p_i$ $\forall i$. Then $\lim_{k\rightarrow \infty} \mathbb{E}\norm{\theta^k-\theta^*}^2$ exists and is determined as 
$\delta^\infty:=\lim_{k\rightarrow \infty} \mathbb{E}\norm{\theta^k-\theta^*}^2=\trace(\mathbb{Q}^\infty)$
where $\mathbb{Q}^\infty$ is given by \eqref{eq:resIIDQ}. In addition, the following mean square TD error bounds hold for some constant $C_0$ and any arbitrary small positive $\epsilon$:
\begin{align}
\label{eq:resIIDbound1}
\delta^\infty-C_0 (\sigma(\mathcal{H})+\epsilon)^k\le \mathbb{E}\norm{\theta^k-\theta^*}^2\le \delta^\infty+C_0 (\sigma(\mathcal{H})+\epsilon)^k.
\end{align}
Finally, for sufficiently small $\alpha$, one has $\lim_{k\rightarrow \infty} \mathbb{E}\norm{\theta^k-\theta^*}^2=O(\alpha)$. If $\lambda_{\max}(I_{n_\xi^2}+\alpha(I\otimes \bar{A}+\bar{A}\otimes I))$ is a semisimple eigenvalue, then $\sigma(\mathcal{H})=\sigma(\mathcal{H}_{11})= 1+\real(\lambda_{\max\real}(\bar{A}))\alpha$.  
\end{cor}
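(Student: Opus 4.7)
The plan is to treat this corollary as a direct consequence of Theorem~\ref{thm:IID} applied to the TD instance $\xi^k=\theta^k-\theta^\ast$, where \eqref{eq:TD} shows the error iteration is exactly in the form \eqref{eq:m1} with $H_i=I+\alpha A_i$, $G_i=\alpha b_i$, and the projected Bellman equation giving $\sum_i p_i b_i=0$. So all the closed-form expressions \eqref{eq:resIIDF} and \eqref{eq:resIIDQ} are available, and the corollary's claims reduce to reading trace and spectral data off of these.

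First I would establish Schur stability of the full block matrix $\mathcal{H}$. Because $\mathcal{H}$ is block lower triangular, $\sigma(\mathcal{H})=\max\{\sigma(\mathcal{H}_{11}),\sigma(\mathcal{H}_{22})\}$; the paper has already noted that $\sigma(\mathcal{H}_{22})<1$ forces $\sigma(\mathcal{H})<1$ (via Proposition 3.6 in \cite{costa2006}). With that in hand, Proposition~\ref{prop:ltiu} applied to the LTI system \eqref{eq:LTIIID} (whose input is the constant vector $[0;\alpha^2\sum_i p_i(b_i\otimes b_i)]$) gives the existence of the joint limit $[\mu^\infty;\vect(\mathbb{Q}^\infty)]$ with the formula \eqref{eq:resIIDQ}, and the tight rate bound $\norm{[\mu^k;\vect(\mathbb{Q}^k)]-[\mu^\infty;\vect(\mathbb{Q}^\infty)]}\le C_0(\sigma(\mathcal{H})+\epsilon)^k$. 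Since $\mathbb{E}\norm{\theta^k-\theta^\ast}^2=\trace(\mathbb{Q}^k)=\vect(I_{n_\theta})^\tp\vect(\mathbb{Q}^k)$, Cauchy--Schwarz converts the above LTI bound into $\bigl|\mathbb{E}\norm{\theta^k-\theta^\ast}^2-\delta^\infty\bigr|\le C_0(\sigma(\mathcal{H})+\epsilon)^k$, yielding both sides of \eqref{eq:resIIDbound1} simultaneously.

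For the $O(\alpha)$ claim on the steady state, I would inspect \eqref{eq:resIIDQ} directly: the prefactor is an explicit $\alpha$, and the matrix $I\otimes\bar{A}+\bar{A}\otimes I+\alpha\sum_i p_i(A_i\otimes A_i)$ is a bounded perturbation of $I\otimes\bar{A}+\bar{A}\otimes I$, which is invertible whenever $\bar{A}$ is Hurwitz (its eigenvalues are $\lambda_i(\bar{A})+\lambda_j(\bar{A})$, all with strictly negative real part). So for sufficiently small $\alpha$ the inverse is $O(1)$, and hence $\vect(\mathbb{Q}^\infty)=O(\alpha)$, which after taking trace gives $\delta^\infty=O(\alpha)$.

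The final identity $\sigma(\mathcal{H})=\sigma(\mathcal{H}_{11})=1+\Re(\lambda_{\max\real}(\bar{A}))\alpha$ is the step where I would be most careful. From $\mathcal{H}_{11}=I+\alpha\bar{A}$, the eigenvalues are $1+\alpha\lambda_i(\bar{A})$, and the expansion $|1+\alpha\lambda_i|=1+\alpha\Re(\lambda_i)+O(\alpha^2)$ gives $\sigma(\mathcal{H}_{11})=1+\Re(\lambda_{\max\real}(\bar{A}))\alpha+O(\alpha^2)$. For $\mathcal{H}_{22}$, under the semisimple-eigenvalue hypothesis, the first-order eigenvalue perturbation argument sketched in the ``Stability condition'' discussion of the paper, together with the spectrum of $I\otimes\bar{A}+\bar{A}\otimes I$ being $\{\lambda_i+\lambda_j\}$, gives $\sigma(\mathcal{H}_{22})=1+2\Re(\lambda_{\max\real}(\bar{A}))\alpha+O(\alpha^2)$. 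Since $\bar{A}$ is Hurwitz so $\Re(\lambda_{\max\real}(\bar{A}))<0$, the factor of $2$ makes $\sigma(\mathcal{H}_{22})<\sigma(\mathcal{H}_{11})$ for small $\alpha$, so the maximum over the two blocks is realized by $\mathcal{H}_{11}$. The main obstacle here is justifying the first-order perturbation expansion of $\sigma(\mathcal{H}_{22})$ rigorously; this is precisely what the semisimple-eigenvalue hypothesis is there for, and the Lidskii-type analysis of \cite{moro1997lidskii,kato2013perturbation} cited in the introduction handles it with the $\alpha^2\sum_i p_i(A_i\otimes A_i)$ term treated as the analytic perturbation of $I+\alpha(I\otimes\bar{A}+\bar{A}\otimes I)$.
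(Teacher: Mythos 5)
Your proposal is correct and follows essentially the same route as the paper's proof: take the trace of $\mathbb{Q}^k$, apply Proposition~\ref{prop:ltiu} to the block lower-triangular LTI system \eqref{eq:LTIIID} with constant input to get the limit and the linear rate $(\sigma(\mathcal{H})+\epsilon)^k$, and use eigenvalue perturbation (as in the supplementary material's ``remark on the IID case'') for the $\sigma(\mathcal{H})=\sigma(\mathcal{H}_{11})$ identity. You fill in details the paper leaves implicit — notably the direct invertibility of $I\otimes\bar{A}+\bar{A}\otimes I$ for the $O(\alpha)$ claim and the explicit comparison $\sigma(\mathcal{H}_{22})<\sigma(\mathcal{H}_{11})$ via the factor of $2$ — but these are elaborations of the same argument, not a different one.
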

\begin{proof}
Recall that we have $\mathbb{E}\norm{\theta^k-\theta^*}^2=\trace(\mathbb{Q}^k)$. Taking limits on both sides leads to the expression for $\delta^\infty$. Then we can apply Proposition \ref{prop:ltiu} to obtain a linear convergence bound for $\mathbb{Q}^k$ which eventually leads to  \eqref{eq:resIIDbound1}.  Finally, we can apply standard matrix perturbation theory to show  $\delta^\infty=O(\alpha)$ and $\sigma(\mathcal{H})=\sigma(\mathcal{H}_{11})= 1+\real(\lambda_{\max\real}(\bar{A}))\alpha$.
\end{proof}
The above corollary gives both upper and lower bounds for the mean square error of TD learning. From the above result,  the final TD estimation error is actually exactly on the order of $O(\alpha)$. This justifies the tightness of the existing upper bounds for the final TD error up to a constant factor. From the above corollary, we can also see that one can increase the convergence rate at the price of increasing the steady state error.
This is consistent with the finite sample bound in the literature \cite{bhandari2018finite,srikant2019finite}. 
 It is also worth mentioning that $(\sigma(\mathcal{H})+\epsilon)$ is a tight estimation for the convergence rate of the   matrix power $(\mathcal{H})^k$. See the supplementary material for more explanations.


\section{Analysis under the  Markov assumption}
\label{sec:Markov}

Now we can analyze the behaviors of TD learning 
 under the general assumption that $\{z^k\}$ is a Markov chain. 
Recall that the augmented mean vector $q^k$ and the augmented covariance matrix $Q^k$ have been defined in Section \ref{sec:JumpFor}.
 We can directly modify \eqref{eq:jumpGeneral} to obtain the following result.
\begin{thm}
\label{thm:Markov}
Consider the jump system model \eqref{eq:m1} with $H_i=I+\alpha A_i$, $G_i=\alpha b_i$, and $y^k=1$.  Suppose $\{z^k\}$ is a Markov chain sampled from $\mathcal{N}$ using the transition matrix $P$. In addition, define $p_i^k=\mathbb{P}(z^k=i)$ and set the augmented vector $p^k:=\bmat{p_1^k & p_2^k & \ldots & p_n^k}^\tp$. Clearly $p^k= (P^\tp)^k p^0$.
Further denote the augmented vectors as $b:=\bmat{b_1^\tp & b_2^\tp & \ldots & b_n^\tp}^\tp$, $\hat{B}=\bmat{(b_1\otimes b_1)^\tp & \ldots & (b_n\otimes b_n)^\tp}^\tp$, and set $S(b_i, A_i):= (b_i\otimes (I+\alpha A_i)+(I+\alpha A_i)\otimes b_i)$.
 Then $q^k$ and $\vect(Q^k)$ are governed by the following  state-space model:
\begin{align}
\label{eq:MarkovSys}
\bmat{q^{k+1} \\ \vect(Q^{k+1})}=\bmat{ \mathcal{H}_{11}& 0 \\ \mathcal{H}_{21} & \mathcal{H}_{22}}\bmat{q^k \\ \vect(Q^k)}+\bmat{\alpha((P^\tp\diag(p_i^k))\otimes I_{n_\xi})b\\ \alpha^2((P^\tp\diag(p_i^k))\otimes I_{n_\xi^2})\hat{B}},
\end{align} 
where $\mathcal{H}_{11}$, $\mathcal{H}_{21}$ and $\mathcal{H}_{22}$ are given by
\begin{align}
\label{eq:HMarkov}
\begin{split}
\mathcal{H}_{11}&=(P^\tp \otimes I_{n_\xi}) \diag(I_{n_\xi}+\alpha A_i),\\
\mathcal{H}_{21}&=\alpha\bmat{p_{11} S(b_1, A_1) & \ldots &  p_{n1} S(b_n, A_n) \\ \vdots & \ddots & \vdots \\  p_{1n} S(b_1, A_1) &  \ldots & p_{nn} S(b_n, A_n)}, \\
\mathcal{H}_{22}&= (P^\tp \otimes I_{n_\xi^2}) \diag((I_{n_\xi}+\alpha A_i)\otimes (I_{n_\xi}+\alpha A_i)).
 \end{split}
 \end{align}
In addition,  the following closed-form solution holds for any $k$
\begin{align}
\label{eq:resMarkov}
\begin{split}
q^k&=(\mathcal{H}_{11})^k q^0+\alpha\sum_{t=0}^{k-1} (\mathcal{H}_{11})^{k-1-t} ((P^\tp\diag(p_i^t))\otimes I_{n_\xi})b,\\
\vect(Q^k)&=(\mathcal{H}_{22})^k \vect(Q^0)+\sum_{t=0}^{k-1}(\mathcal{H}_{22})^{k-1-t} \left(\mathcal{H}_{21}q^t+ \alpha^2((P^\tp\diag(p_i^t))\otimes I_{n_\xi^2})\hat{B}\right),
\end{split}
\end{align}
where $\mathcal{H}_{11}$, $\mathcal{H}_{21}$ and $\mathcal{H}_{22}$ are determined by \eqref{eq:HMarkov}.
\end{thm}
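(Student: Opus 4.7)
}

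The plan is to start from the block recursions \eqref{eq:q1S}--\eqref{eq:co1S} for $q_j^{k+1}$ and $Q_j^{k+1}$ provided by \cite[Proposition 3.35]{costa2006}, substitute $H_i=I+\alpha A_i$ and $G_i=\alpha b_i$, then reorganize the sums over $i$ with weights $p_{ij}$ into block--Kronecker form, and finally apply the LTI closed--form formula \eqref{eq:res1} from Section \ref{sec:LTI} to the resulting triangular LTI system.

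First I would stack the block recursion \eqref{eq:q1S} over $j=1,\ldots,n$. Observing that $\sum_i p_{ij}(I+\alpha A_i)q_i^k$ is exactly the $j$-th block of $(P^\tp\otimes I_{n_\xi})\diag(I_{n_\xi}+\alpha A_i)\, q^k$, and that $\alpha\sum_i p_{ij}p_i^k b_i$ is the $j$-th block of $\alpha((P^\tp\diag(p_i^k))\otimes I_{n_\xi})b$, this yields the top block row of \eqref{eq:MarkovSys} together with the formula for $\mathcal{H}_{11}$. This step is mostly bookkeeping: verify the block $(j,i)$ entries on both sides.

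Next I would vectorize \eqref{eq:co1S}. The key tool is $\vect(AXB)=(B^\tp\otimes A)\vect(X)$, which gives
\begin{align*}
\vect(H_i Q_i^k H_i^\tp)&=(H_i\otimes H_i)\vect(Q_i^k),\\
\vect\!\bigl(H_i q_i^k G_i^\tp+G_i(q_i^k)^\tp H_i^\tp\bigr)&=(G_i\otimes H_i+H_i\otimes G_i)\, q_i^k,\\
\vect(p_i^k G_i G_i^\tp)&=p_i^k (G_i\otimes G_i).
\end{align*}
Substituting $H_i=I+\alpha A_i$, $G_i=\alpha b_i$ makes the cross term $\alpha S(b_i,A_i)$, the quadratic term $\alpha^2 b_i\otimes b_i$, and the homogeneous term $(I+\alpha A_i)\otimes (I+\alpha A_i)$ appear with exactly the coefficients claimed for $\mathcal{H}_{21}$, $\mathcal{H}_{22}$, and the $Q$-input. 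Stacking over $j$ and using the same identification of $\sum_i p_{ij}(\cdot)$ with a left multiplication by $P^\tp\otimes I_{n_\xi^2}$ (composed with a block--diagonal matrix in the homogeneous term) gives the bottom row of \eqref{eq:MarkovSys}.

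With \eqref{eq:MarkovSys} established, the closed form \eqref{eq:resMarkov} follows immediately from the block triangular structure. Since the $q$-dynamics do not depend on $Q$, I would apply \eqref{eq:res1} with state matrix $\mathcal{H}_{11}$ and input $\alpha((P^\tp\diag(p_i^t))\otimes I_{n_\xi})b$ to obtain the first line of \eqref{eq:resMarkov}. For $\vect(Q^k)$ I would treat $\mathcal{H}_{21}q^t+\alpha^2((P^\tp\diag(p_i^t))\otimes I_{n_\xi^2})\hat{B}$ as an exogenous input driving an LTI system with state matrix $\mathcal{H}_{22}$, and invoke \eqref{eq:res1} again. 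The main obstacle is purely notational: keeping careful track of which index ranges over source states $i$ versus target states $j$, and confirming that weighting each $(j,i)$ block by $p_{ij}$ corresponds to a left multiplication by $P^\tp\otimes I$ rather than $P\otimes I$; once this convention is fixed the rest of the proof is algebraic manipulation.
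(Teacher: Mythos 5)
Your proposal is correct and follows essentially the same route as the paper: the paper's own proof sketch likewise derives \eqref{eq:MarkovSys} from \eqref{eq:q1S}--\eqref{eq:co1S} (i.e., Proposition 3.35 of the MJLS reference), vectorizes, and then iterates via \eqref{eq:res1}, with the only cosmetic difference being that the supplementary material re-derives \eqref{eq:q1S}--\eqref{eq:co1S} from conditional expectations with indicator functions for self-containedness rather than citing them directly. Your vectorization identities and the $P^\tp\otimes I$ bookkeeping all check out.
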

\begin{proof}
A detailed proof is presented in the supplementary material. We present a proof sketch here. Notice \eqref{eq:MarkovSys} is a direct consequence of \eqref{eq:q1S} and
\eqref{eq:co1S} (which are special cases of Proposition 3.35 in \cite{costa2006}).
Specifically, one can verify the following equations using the Markov assumption
\begin{align}
q_j^{k+1}&=\sum_{i=1}^n p_{ij}\left( (I+\alpha A_i)q_i^k+\alpha p_i^k b_i \right),\\
Q_j^{k+1}&=\sum_{i=1}^n p_{ij} \left((I+\alpha A_i )Q_i^k (I+\alpha A_i)^\tp+2\alpha \sym((I+\alpha A_i) q_i^k b_i^\tp)+\alpha^2 p_i^k b_i b_i^\tp  \right).
\end{align}

Then we can apply the basic property of the vectorization operation $\vect$ to obtain \eqref{eq:MarkovSys}. 
 Applying \eqref{eq:res1} to iterate \eqref{eq:MarkovSys} directly leads to \eqref{eq:resMarkov}.
\end{proof}

Therefore, the evolutions of $q^k$ and $Q^k$ can be fully understood via the well-established LTI system theory.
Now we discuss the implications of Theorem \ref{thm:Markov} for TD learning.

\paragraph{Stability condition for TD learning.} Similar to the IID case, 
the necessary and sufficient stability condition is  $\sigma(\mathcal{H}_{22})<1$. Now $\mathcal{H}_{22}$ becomes a much larger matrix depending on the transition matrix $P$.
An important question is how to choose $\alpha$ such that $\sigma(\mathcal{H}_{22})<1$ for some given $\{A_i\}$, $\{b_i\}$, $P$, and $\{p^0\}$. Again, we perform an eigenvalue perturbation analysis for the matrix $\mathcal{H}_{22}$. This case is quite subtle due to the fact that we are no longer perturbing an identity matrix. We are perturbing the matrix $(P^\tp\otimes I_{n_\xi^2})$ and the eigenvalues here are not simple. Under the ergodicity assumption, the largest eigenvalue for $(P^\tp\otimes I_{n_\xi^2})$ (which is $1$) is semisimple. Hence we can directly apply the results in Section II of  \cite{kato2013perturbation} or Theorem 2.1 in \cite{moro1997lidskii} to show
\begin{align}
\lambda_{\max}(\mathcal{H}_{22})=1+2\lambda_{\max\real}(\bar{A})\alpha+o(\alpha).
\end{align}
where $\bar{A}=\sum_{i=1}^n p_i^{\infty} A_i$ and $p^\infty$ is the unique stationary distribution of the Markov chain under the ergodicity assumption. 
Then we still have $\sigma(\mathcal{H}_{22})\approx 1+2\real(\lambda_{\max\real}(\bar{A}))\alpha+o(\alpha)$. Therefore, as long as $\bar{A}$ is Hurwitz, there exists sufficiently small $\alpha$ such that $\sigma(\mathcal{H}_{22})<1$. This is consistent with Assumption 3  in \cite{srikant2019finite}. To understand the details of our perturbation argument, we refer the readers to the remark placed after Theorem 2.1 in \cite{moro1997lidskii}. Notice we have
\begin{align*} 
\mathcal{H}_{22}=P^\tp\otimes I_{n_\xi^2}+\alpha(P^\tp\otimes I_{n_\xi^2})(A_i\otimes I+I\otimes A_i)+O(\alpha^2).
\end{align*}
The largest eigenvalue of $P^\tp\otimes I_{n_\xi^2}$ is semisimple due to the ergodicity assumption. Then the perturbation result directly follows as a consequence of 
Theorem 2.1 in \cite{moro1997lidskii}. More explanations are also provided in the supplementary material.

\paragraph{Exact limit for the mean square TD error and related convergence rate.}  
Assume the Markov chain is geometrically ergodic, and then $p^t\rightarrow p^{\infty}$ at some linear rate where $p^\infty$ is the stationary distribution. In this case, we can apply Proposition \ref{prop:lti} to show that the mean square error of TD learning converges linearly to an exact limit.
\begin{cor}
Consider the TD update \eqref{eq:TD} with $\bar{A}$ being Hurwitz. Let $\{z^k\}$ be a Markov chain sampled from $\mathcal{N}$ using the transition matrix $P$. Suppose $\sigma(\mathcal{H}_{22})<1$. We set $N=n n_{\xi}^2$. If we assume $p^k\rightarrow p^\infty$ where $p^\infty$ is the stationary distribution for $\{z^k\}$, then we have 
\begin{align}
\label{eq:exactTD}
\begin{split}
q^{\infty}&=\lim_{k\rightarrow \infty} q^k=\alpha(I-\mathcal{H}_{11})^{-1}((P^\tp\diag(p_i^\infty))\otimes I_{n_\xi})b,\\
\vect(Q^\infty)&=\lim_{k\rightarrow 0} \vect(Q^k)=\alpha^2 (I_{N}-\mathcal{H}_{22})^{-1}\left(\alpha^{-2}\mathcal{H}_{21}q^{\infty}+((P^\tp\diag(p_i^\infty))\otimes I_{n_\xi^2})\hat{B}\right),\\
\delta^\infty&=\lim_{k\rightarrow \infty} \mathbb{E}\norm{\theta^k-\theta^*}^2=(\mathbf{1}_n^\tp \otimes \vect(I_{n_\theta})^\tp)\vect(Q^\infty).
\end{split}
\end{align} 
If we further assume the geometric ergodicity, i.e. $\norm{p^k-p^\infty}\le C \tilde{\rho}^k$, then we have
\begin{align}
\label{eq:TDboundMarkov}
\delta^\infty -C_0 \max\{\sigma(\mathcal{H})+\epsilon, \tilde{\rho}\}^k.\le \mathbb{E}\norm{\theta^k-\theta^*}^2\le \delta^\infty +C_0 \max\{\sigma(\mathcal{H})+\epsilon, \tilde{\rho}\}^k.
\end{align}
where $C_0$ is some constant and $\epsilon$ is an arbitrary small positive number.
For sufficiently small $\alpha$, we have $\delta^\infty=O(\alpha)$ and $\sigma(\mathcal{H})=1+\real(\lambda_{\max\real}(\bar{A}))\alpha+o(\alpha)$.
\end{cor}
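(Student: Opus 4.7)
The plan is to view \eqref{eq:MarkovSys} as an LTI system of the form \eqref{eq:lin1} driven by an input $u^k$ which is an affine continuous function of the marginal law $p^k$, and then to invoke Propositions \ref{prop:ltilim} and \ref{prop:lti} directly. First I would note that the block lower-triangular structure of $\mathcal{H}$ gives $\sigma(\mathcal{H})=\max\{\sigma(\mathcal{H}_{11}),\sigma(\mathcal{H}_{22})\}$, and that the paper's earlier invocation of Proposition 3.6 in \cite{costa2006} already guarantees $\sigma(\mathcal{H})<1$ under the hypothesis $\sigma(\mathcal{H}_{22})<1$, so the propositions do apply to \eqref{eq:MarkovSys}.

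Next, for the closed forms in \eqref{eq:exactTD}, I would substitute $p^\infty$ into $u^k$ to obtain $u^\infty$, apply Proposition \ref{prop:ltilim} to conclude $(q^\infty,\vect(Q^\infty))=(I-\mathcal{H})^{-1}u^\infty$, and exploit the block-triangular form to invert sequentially: first solve $(I-\mathcal{H}_{11})q^\infty=\alpha((P^\tp\diag(p_i^\infty))\otimes I_{n_\xi})b$, then substitute the result into the $\vect(Q)$ block and invert $(I-\mathcal{H}_{22})$. The final identity of \eqref{eq:exactTD} would then follow from the pointwise identity $\ee\norm{\theta^k-\theta^*}^2=\trace(\sum_i Q_i^k)=(\mathbf{1}_n^\tp\otimes\vect(I_{n_\theta})^\tp)\vect(Q^k)$ and continuity of this linear functional.

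For the rate bound \eqref{eq:TDboundMarkov}, geometric ergodicity gives $\norm{p^k-p^\infty}\le C\tilde{\rho}^k$; since the map $p^k\mapsto u^k$ is Lipschitz (with Lipschitz constant of order $\alpha$), I would deduce $\norm{u^k-u^\infty}\le C'\tilde{\rho}^k$. Proposition \ref{prop:lti} would then yield $\norm{(q^k,\vect(Q^k))-(q^\infty,\vect(Q^\infty))}\le C_0\max\{\sigma(\mathcal{H})+\epsilon,\tilde{\rho}\}^k$, and both inequalities in \eqref{eq:TDboundMarkov} would follow by applying the bounded linear functional $\vect(Q)\mapsto(\mathbf{1}_n^\tp\otimes\vect(I_{n_\theta})^\tp)\vect(Q)$ and the triangle inequality.

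The hard part will be the small-$\alpha$ asymptotics, in particular the scaling $\delta^\infty=O(\alpha)$. Naively the formula for $\vect(Q^\infty)$ contains $(I-\mathcal{H}_{22})^{-1}$, which blows up as $\alpha\to 0$ because $\sigma(\mathcal{H}_{22})\to 1$. My plan is to use the Lidskii-type expansion of Theorem 2.1 in \cite{moro1997lidskii} (already invoked by the paper for $\mathcal{H}_{22}$) to show that the perturbed eigenvalues of $\mathcal{H}_{22}$ near $1$ shift by exactly $O(\alpha)$; decomposing along the associated invariant subspaces, the restriction of $(I-\mathcal{H}_{22})^{-1}$ to the critical subspace would have operator norm $O(1/\alpha)$ while on the complementary subspace it would remain $O(1)$. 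Combined with the explicit $O(\alpha^2)$ prefactor in \eqref{eq:exactTD}, this would give $\vect(Q^\infty)=O(\alpha)$, hence $\delta^\infty=O(\alpha)$. To obtain $\sigma(\mathcal{H})=1+\real(\lambda_{\max\real}(\bar{A}))\alpha+o(\alpha)$, I would apply the same Lidskii machinery to $\mathcal{H}_{11}=(P^\tp\otimes I_{n_\xi})+\alpha(P^\tp\otimes I_{n_\xi})\diag(A_i)+O(\alpha^2)$; the resulting first-order coefficient is exactly $\real(\lambda_{\max\real}(\bar{A}))$, half of the coefficient in the paper's expansion of $\sigma(\mathcal{H}_{22})$ because the latter carries the doubled Kronecker structure. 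Because $\real(\lambda_{\max\real}(\bar{A}))<0$, this forces $\sigma(\mathcal{H}_{11})>\sigma(\mathcal{H}_{22})$ (both below $1$), so $\sigma(\mathcal{H})=\sigma(\mathcal{H}_{11})$ as claimed.
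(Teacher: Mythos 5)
Your overall route coincides with the paper's: both treat \eqref{eq:MarkovSys} as the LTI system of Theorem \ref{thm:Markov}, obtain \eqref{eq:exactTD} from Proposition \ref{prop:ltilim} together with the block lower-triangular inversion of $I-\mathcal{H}$, and get \eqref{eq:TDboundMarkov} from Proposition \ref{prop:lti} after converting geometric ergodicity of $p^k$ into a geometric bound on $u^k-u^\infty$. The expansion $\sigma(\mathcal{H})=\sigma(\mathcal{H}_{11})=1+\real(\lambda_{\max\real}(\bar{A}))\alpha+o(\alpha)$ via Theorem 2.1 of \cite{moro1997lidskii} also matches the paper's supplementary perturbation analysis. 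The one substantive divergence is the $\delta^\infty=O(\alpha)$ claim: the paper invokes Theorem 1 of \cite{avrachenkov2013analytic} to get a Laurent expansion $(I_N-\mathcal{H}_{22})^{-1}=\alpha^{-1}B_{-1}+B_0+O(\alpha)$ (using that the singularity order of $I_N-P^\tp\otimes I_{n_\xi^2}$ is one), whereas you argue via an invariant-subspace split that the resolvent is $O(1/\alpha)$ on the critical subspace. These are morally the same, but be aware that eigenvalue locations alone do not bound the resolvent of a non-normal matrix; you would still need the semisimplicity/order-one-singularity structure that the cited Laurent-expansion theorem packages for you.

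There is, however, a genuine gap in your bookkeeping for $\vect(Q^\infty)=O(\alpha)$. You combine the $\alpha^2$ prefactor with the $O(1/\alpha)$ resolvent, implicitly treating the bracketed input $\alpha^{-2}\mathcal{H}_{21}q^\infty+((P^\tp\diag(p_i^\infty))\otimes I_{n_\xi^2})\hat{B}$ as $O(1)$. Since $\mathcal{H}_{21}=O(\alpha)$, that requires $q^\infty=O(\alpha)$; but your own formula gives $q^\infty=\alpha(I-\mathcal{H}_{11})^{-1}(\cdot)b$ with $(I-\mathcal{H}_{11})^{-1}=O(1/\alpha)$, so naively $q^\infty=O(1)$, the bracket is $O(1/\alpha)$, and the chain yields only $\vect(Q^\infty)=O(1)$. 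The missing ingredient is the standing assumption $\sum_{i=1}^n p_i^\infty b_i=0$: since $\mathbf{1}_n^\tp P^\tp=\mathbf{1}_n^\tp$, the input $\alpha((P^\tp\diag(p_i^\infty))\otimes I_{n_\xi})b$ is annihilated by the left critical projector of $I-P^\tp\otimes I_{n_\xi}$, so the $\alpha^{-1}B_{-1}$ term of $(I-\mathcal{H}_{11})^{-1}$ does not fire and $q^\infty=O(\alpha)$ (consistent with $\mu^\infty=\sum_i q_i^\infty=0$). Only after this cancellation does the bracket become $O(1)$ and your $\alpha^2\cdot O(1/\alpha)$ count deliver $\delta^\infty=O(\alpha)$.
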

\begin{proof}
Notice $\mathbb{E} \norm{\theta^k-\theta^*}^2=(\mathbf{1}_n^\tp \otimes \vect(I_{n_\xi})^\tp)\vect(Q^k)$. We can directly apply Theorem \ref{thm:Markov}, Proposition \ref{prop:ltilim}, and Proposition \ref{prop:lti} to prove \eqref{eq:exactTD} and \eqref{eq:TDboundMarkov}. When $\alpha$ is small, we can apply the Laurent series trick in  \cite{avrachenkov2013analytic, gonzalez2015laurent} to show that
$\lim_{k\rightarrow 0} \vect(Q^k)=O(\alpha)$ and $\delta^\infty=O(\alpha)$. 
The difficulty here is that $I_{N}-P^\tp\otimes I_{n_\xi^2}$ is a singular matrix and hence $(I_{N}-\mathcal{H}_{22})^{-1}$ does not have a Taylor series around $\alpha=0$. Therefore, we need to apply some recent matrix inverse perturbation result to perform a Laurent expansion of $(I_{N}-\mathcal{H}_{22})^{-1}$. From the ergodicity assumption, we know the singularity order of $I_{N}-P^\tp\otimes I_{n_\xi^2}$ is just $1$.
Applying Theorem 1 in \cite{avrachenkov2013analytic}, we can obtain the Laurent expansion of $(I_{N}-\mathcal{H}_{22})^{-1}$ and show $(I_{N}-\mathcal{H}_{22})^{-1}=\alpha^{-1} B_{-1}+B_0+\alpha B_1+O(\alpha^2)$. Consequently, we have 
 $\lim_{k\rightarrow 0} \vect(Q^k)=O(\alpha)$ and $\delta^\infty=O(\alpha)$. By applying Theorem 2.1 in \cite{moro1997lidskii}, we can show $\sigma(\mathcal{H})=1+\real(\lambda_{\max\real}(\bar{A}))\alpha+o(\alpha)$.
\end{proof}

Due to the assumption $\sum_{i=1}^n p_i^{\infty} b_i=0$, we have $\lim_{k\rightarrow \infty} q^k\neq 0$ in general but $\mu^\infty=0$. 
Again, we have obtained both upper and lower bounds for the mean square TD error. From the above result,  the final TD error is actually exactly on the order of $O(\alpha)$. This justifies the tightness of the existing upper bounds for the final TD error \cite{bhandari2018finite,srikant2019finite} up to a constant factor. From the above corollary, we can also see the trade-off between the convergence rate and the steady state error.
 Clearly, the convergence rate in \eqref{eq:TDboundMarkov} also depends on the initial distribution $p^0$ and the mixing rate of the underlying Markov jump parameter $\{z^k\}$ (which is denoted as $\tilde{\rho}$). 
If the initial distribution is the stationary distribution, i.e. $p^0=p^\infty$, the input to the LTI dynamical system \eqref{eq:MarkovSys} is just a constant for all $k$ and then we will be able to obtain an exact formula similar to \eqref{eq:resIIDF}. However, for a general initial distribution $p^0$, the mixing rate $\tilde{\rho}$ matters more and may affect the overall convergence rate. One resultant guideline for algorithm design is that increasing $\alpha$  may not increase the convergence rate when the mixing rate $\tilde{\rho}$ dominates the convergence process.
When $\alpha$ becomes smaller and smaller, eventually $\sigma(\mathcal{H})$ is going to become the dominating term and the mixing rate does not affect the convergence rate any more. 

\paragraph{Algorithm design.} Here we make a remark on how our proposed MJLS framework can be further extended to provide clues for designing fast TD learning.
When $\alpha$ (or even other hyperparameters including momentum term) is changing with time, we can still obtain expressions of $\vect(Q^k)$ and $q^k$ in an iterative form. However, both $\mathcal{H}$ and $\mathcal{G}$ depend on $k$ now. 
Then given a fixed time budget $T$, in theory it is possible to minimize the mean square estimation error at $T$ subject to some optimization constraints in the form of a time-varying iteration $\bmat{q^{k+1}\\ \vect(Q^{k+1})}=\mathcal{H}(k) \bmat{q^k\\ \vect(Q^k)}+\mathcal{G}(k)u^k$. One may use this control-oriented optimization formulation to gain some theoretical insights on how to choose hyperparameters adaptively for fast TD learning. Clearly, solving such an optimization problem requires knowing the underlying Markov model. However, this type of theoretical study may lead to new hyperparameter tuning heuristics that do not require the model information.


%
%
\subsection*{Acknowledgments}

Bin Hu acknowledges helpful discussions with Rayadurgam Srikant.

\bibliographystyle{plain}
\small

\bibliography{neurips_2019}
\clearpage
\setcounter{section}{0}
\renewcommand{\thesection}{\Alph{section}}

\begin{center}
{\Large\bf Supplementary Material}
\end{center}
\numberwithin{equation}{section}
\numberwithin{thm}{section}
\numberwithin{lem}{section}
\numberwithin{prop}{section}

\section{More facts about LTI systems}

\subsection{Tightness of the spectral radius stability condition}
\label{sec:A1}
 Technically speaking, the condition $\sigma(\mathcal{H})<1$ is necessary and sufficient for the asymptotic stability of the LTI system \eqref{eq:lin1}. Since exponential stability and asymptotic stability are equivalent notions of stability for LTI systems, the condition $\sigma(\mathcal{H})<1$ is also necessary and sufficient for the exponential stability of \eqref{eq:lin1}. See Theorem 8.3 and Theorem 8.4 in \cite{hespanha2009} for formal statements of these facts. We will give a more intuitive explanation here. Specifically, we look at the behaviors of the matrix power term $(\mathcal{H})^k x^0$.
This is the homogeneous state response of \eqref{eq:lin1}. There are three possible behaviors for this term. 
\begin{enumerate}
\item When $\mathcal{H}$ is Schur stable (or equivalently $\sigma(\mathcal{H})<1$), the term $(\mathcal{H})^k$ converges to a zero matrix and $(\mathcal{H})^k x^0\rightarrow 0$ for any arbitrary $x^0$. The convergence rate is linear and is completely specified by the spectral radius $\sigma(\mathcal{H})$.
\item When $\sigma(\mathcal{H})\le 1$ and all
the Jordan blocks corresponding to eigenvalues with magnitude equal to $1$ are
$1\times 1$, $(\mathcal{H})^k$ remains bounded for any $k$. This is the so-called marginal stability case where $(\mathcal{H})^k x^0$ remains bounded but may not converge to $0$.
\item For all other cases, $(\mathcal{H})^k$ is unbounded and there exists $x^0$ such that $(\mathcal{H})^k x^0\rightarrow \infty$.
\end{enumerate}
See Section 7.2 in \cite{hespanha2009} for a detailed explanation of the above fact.
Consequently, we can only guarantee $(\mathcal{H})^k x^0$ to converge for all $x^0$ when $\sigma(\mathcal{H})<1$.
Therefore, the condition $\sigma(\mathcal{H})<1$ is a tight condition for the stability of the LTI system \eqref{eq:lin1}.
As mentioned above, when $\sigma(\mathcal{H})<1$, $(\mathcal{H})^k x^0$ converges at a linear rate completely determined by $\sigma(\mathcal{H})$. Technically speaking, the convergence rate is either equal to $\sigma(\mathcal{H})+\epsilon$ for some arbitrary small positive $\epsilon$ or just equal to $\sigma(\mathcal{H})$ itself.
Now we provide a detailed discussion on this convergence rate.

\subsection{Convergence rate of the matrix power}
\label{sec:rateH}

Notice we have $(\mathcal{H})^k x^0=\rho^k \left(\rho^{-1}\mathcal{H}\right)^k x^0$.
As long as $\left(\rho^{-1}\mathcal{H}\right)^k x^0$ stays bounded for any $x^0$, the term  $(\mathcal{H})^k x^0$ will converge at the linear rate $\rho$.

From  Section 7.2 in \cite{hespanha2009}, we can almost directly see how to determine the convergence rate of $(\mathcal{H})^k x^0$.
\begin{enumerate}
\item When $\sigma(\mathcal{H})< 1$ and all
the Jordan blocks corresponding to eigenvalues with magnitude equal to $\sigma(\mathcal{H})$ are
$1\times 1$,  we can choose $\rho=\sigma(\mathcal{H})$ and show $\sigma(\rho^{-1}\mathcal{H})\le 1$ and all
the Jordan blocks corresponding to eigenvalues (of  $\rho^{-1}\sigma(\mathcal{H})$) with magnitude equal to $1$ are
$1\times 1$.  Then $(\rho^{-1}\mathcal{H})^k x^0$ remains bounded for all $x^0$ and hence $(\mathcal{H})^k x^0=\rho^k \left(\rho^{-1}\mathcal{H}\right)^k x^0$ converges at a linear rate $\rho=\sigma(\mathcal{H})$.

\item  When $\sigma(\mathcal{H})< 1$ and some of
the Jordan blocks corresponding to eigenvalues with magnitude equal to $\sigma(\mathcal{H})$ are not
$1\times 1$, we need to choose $\rho=\sigma(\mathcal{H})+\epsilon$ for some arbitrary small $\epsilon>0$. Then $\sigma(\rho^{-1}\mathcal{H})< 1$ and $(\rho^{-1}\mathcal{H})^k x^0$ converges to $0$ for all $x^0$. Consequently,  $(\mathcal{H})^k x^0=\rho^k \left(\rho^{-1}\mathcal{H}\right)^k x^0$ converges at a linear rate $\rho=\sigma(\mathcal{H})+\epsilon$.
\end{enumerate}

In this paper, for simplicity we do not want to further look at the Jordan decomposition structure of $\mathcal{H}$ and hence we always set the rate as $\rho=\sigma(\mathcal{H})+\epsilon$.
One can also use the relationship between spectral radius and other matrix norms to obtain the above convergence rate. See Section 2.2 in \cite{Lessard2014} for a detailed discussion. The arbitrarily small number $\epsilon$ also appears in the argument there.

\section{More discussions about Markov jump linear systems}

First, we verify that  \eqref{eq:q1S} and \eqref{eq:co1S} are equivalent to the LTI model \eqref{eq:jumpGeneral}.

Rewriting \eqref{eq:q1S} as an LTI model is quite trivial. Rewriting \eqref{eq:co1S} as an LTI system requires applying the vectorization operation to obtain the following formula,
\begin{align}
\begin{split}
\bmat{\vect(Q_1^{k+1}) \\ \vdots \\ \vect(Q_n^{k+1})}&=\bmat{p_{11} H_1\otimes H_1 & \ldots & p_{n1} H_n\otimes H_n\\ \vdots & \ddots & \vdots\\ p_{1n} H_1\otimes H_1 & \ldots & p_{nn} H_n\otimes H_n}\bmat{\vect(Q_1^k) \\ \vdots \\ \vect(Q_n^k)}+\bmat{p_{11}p_1^k I_{n_\xi} & \ldots & p_{n1} p_n^k I_{n_\xi^2}\\ \vdots & \ddots & \vdots \\p_{1n}p_1^k I_{n_\xi^2} & \ldots & p_{nn} p_n^k I_{n_\xi^2} }\bmat{G_1\otimes G_1 \\ \vdots \\ G_n\otimes G_n}\\
&+\bmat{p_{11} (G_1\otimes H_1+H_1\otimes G_1) & \ldots & p_{n1} (G_n\otimes H_n+H_n\otimes G_n)\\ \vdots & \ddots & \vdots \\p_{1n} (G_1\otimes H_1+H_1\otimes G_1) & \ldots & p_{nn} (G_n\otimes H_n+H_n\otimes G_n) }\bmat{q_1^k \\ \vdots \\ q_n^k}.
\end{split}
\end{align}
Then we can augment the update rules for $q^k$ and $\vect(Q^k)$ to obtain the desired LTI model for $(q^k, \vect(Q^k))$.

Now we briefly review how to analyze $q^k$ and $Q^k$ using the LTI model \eqref{eq:jumpGeneral}.
We denote $\mathcal{H}=\bmat{ \mathcal{H}_{11}& 0 \\ \mathcal{H}_{21} & \mathcal{H}_{22}}$.

First, we will have the following closed-form formula for computing $(q^k, \vect(Q^k))$:
\begin{align}
\bmat{q^k \\ \vect(Q^k)}=\left(\bmat{ \mathcal{H}_{11}& 0 \\ \mathcal{H}_{21} & \mathcal{H}_{22}}\right)^k\bmat{q^0 \\ \vect(Q^0)}+\sum_{t=0}^{k-1} \left(\bmat{ \mathcal{H}_{11}& 0 \\ \mathcal{H}_{21} & \mathcal{H}_{22}}\right)^{k-1-t} \bmat{u_q^t \\ u_Q^t}.
\end{align}
The first term on the right side of the above equation will be guaranteed to converge to $0$ if we have the stability condition $\sigma(\mathcal{H})<1$. As discussed in \ref{sec:A1}, the stability condition $\sigma(\mathcal{H})<1$ is fairly tight. Based on Proposition~3.6 in \cite{costa2006}, we know that $\mathcal{H}$ is Schur stable if and only if $\mathcal{H}_{22}$ is Schur stable. Therefore,  the needed stability condition is   $\sigma(\mathcal{H}_{22})<1$. Under this condition, if we have $p_i^k\rightarrow p_i^\infty$, then Statement 2 in Proposition \ref{prop:lti} can be used to show
\begin{align}
\begin{split}
&u_q^\infty=\lim_{k\rightarrow \infty}u_q^k= \bmat{p_{11} G_1  & \ldots &  p_{n1} G_n  \\ \vdots & \ddots & \vdots \\  p_{1n} G_1 &  \ldots & p_{nn} G_n }\bmat{p_1^\infty I_{n_\xi}\\ \vdots \\ p_n^\infty I_{n_\xi}},\\
&u_Q^\infty=\lim_{k\rightarrow \infty}u_Q^k= \bmat{p_{11} G_1\otimes G_1  & \ldots &  p_{n1} G_n\otimes G_n  \\ \vdots & \ddots & \vdots \\  p_{1n} G_1\otimes G_1 &  \ldots & p_{nn} G_n\otimes G_n }\bmat{p_1^\infty I_{n_\xi^2}\\ \vdots \\ p_n^\infty I_{n_\xi^2}},\\
&\bmat{q^\infty\\ \vect(Q^\infty)}=\lim_{k\rightarrow \infty} \bmat{q^k\\ \vect(Q^k)}=\left( I-\mathcal{H}\right)^{-1} \bmat{u_q^\infty\\ u_Q^\infty}.
\end{split}
 \end{align}

Finally, if $\norm{p^k-p^\infty}\le C_p \tilde{\rho}^k$, then there exists a constant $C$ such that the following inequality holds.
\begin{align*}
\norm{\bmat{u_q^k\\ u_Q^k}-\bmat{u_q^\infty\\ u_Q^\infty}}\le C \tilde{\rho}^k.
\end{align*}
If we know $\sigma(\mathcal{H}_{22})<1$, then we can directly apply
Proposition \ref{prop:lti} to obtain the following linear convergence result:
\begin{align*}
\norm{\bmat{q^k\\ \vect(Q^k)}-\bmat{q^\infty \\ \vect(Q^\infty)}} \le C_0 \max\{\sigma(\mathcal{H})+\epsilon, \tilde{\rho}\}^k,
\end{align*}
where $C_0$ is some constant and $\epsilon$ is an arbitrarily small positive number.
We can see that the convergence rates of $\vect(Q^k)$ and $q^k$ depend on both $\sigma(\mathcal{H})$ and the mixing rate of the underlying Markov jump parameter $\{z^k\}$ (which is denoted as $\tilde{\rho}$).

Therefore, when the underlying Markov chain $\{z^k\}$ is aperiodic and irreducible, the mean and covariance of the MJLS just converges to the steady state values at a linear rate specified by $ \max\{\sigma(\mathcal{H})+\epsilon, \tilde{\rho}\}$. 
This is a powerful result that can be potentially applied to more general stochastic approximation schemes other than \eqref{eq:m0}. We also want to mention that 
there are other proofs for the convergence of $(q^k, Q^k)$. See Proposition 3.36 in \cite{costa2006} for an alternative proof. Here, our result is a little bit stronger than   Proposition 3.36 in \cite{costa2006}  since we also specify the convergence rate of $(q^k, Q^k)$.

Finally, it is worth mentioning that under the assumption $\sigma(\mathcal{H}_{22})<1$, one can further prove $\{Q^k\}$ converges to a stationary process in some sense. This is a stronger result. Specifically,
Proposition 3.37 In \cite{costa2006} shows that the MJLS is ``asymptotically wide sense stationary" under the assumption $\sigma(\mathcal{H}_{22})<1$. We are not that interested in the correlation between the updates at different steps since our main purpose is analyzing TD learning. Hence we will skip a detailed discussion on this topic. See Chapter 3.4 in \cite{costa2006} for a thorough treatment.

\section{Detailed proofs}

\subsection{Detailed proofs of Propositions \ref{prop:ltilim}, \ref{prop:ltiu}, and \ref{prop:lti}}

We believe that all the statements in Propositions \ref{prop:ltilim}, \ref{prop:ltiu}, and \ref{prop:lti} are known in the controls field. Since we are not able to find a reference to exactly match the statements, we provide a proof here for completeness.

We will need the following lemma.

\begin{lem}
\label{lem:help1}
Consider the LTI model \eqref{eq:lin1}.
Suppose $\sigma(\mathcal{H})<1$. We set $\rho=\sigma(\mathcal{H})+\epsilon$ where $\epsilon$ is an arbitrary small positive number. Then there exists a positive definite matrix $V$ and a positive constant $\gamma$ s.t. the following inequality holds for all $k$,
\begin{align}
\label{eq:iter1}
(x^{k+1})^\tp V x^{k+1}\le \rho^2 (x^k)^\tp V x^k+\gamma \norm{u^k}^2.
\end{align}
\end{lem}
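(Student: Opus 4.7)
The plan is to reduce the claim to the classical discrete Lyapunov inequality for the autonomous matrix $\mathcal{H}$, and then absorb the input term via a Young-type inequality. Concretely, I would seek $V\succ 0$ such that $\mathcal{H}^\tp V\mathcal{H}\preceq \rho_1^2 V$ for some $\rho_1$ strictly between $\sigma(\mathcal{H})$ and $\rho$, and then trade a small loss $(1+\eta)$ on the $x$-term against a finite penalty $(1+\eta^{-1})$ on the $u$-term, choosing $\eta$ so that $(1+\eta)\rho_1^2\le \rho^2$. The slack $\epsilon>0$ in $\rho=\sigma(\mathcal{H})+\epsilon$ is exactly what creates the room for both $\rho_1$ and $\eta$.

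First, I would set up the Lyapunov matrix via a scaling trick. Pick any $\rho_1\in(\sigma(\mathcal{H}),\rho)$ and consider $\tilde{\mathcal{H}}=\mathcal{H}/\rho_1$. Then $\sigma(\tilde{\mathcal{H}})<1$, so $\tilde{\mathcal{H}}$ is Schur stable, and the discrete Lyapunov theorem (e.g., Theorem~8.4 in \cite{hespanha2009}) produces a positive definite $V$ satisfying $\tilde{\mathcal{H}}^\tp V\tilde{\mathcal{H}}-V=-I$. Unscaling yields $\mathcal{H}^\tp V\mathcal{H}=\rho_1^2 V-\rho_1^2 I\preceq \rho_1^2 V$, which is the quadratic-form contraction I need.

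Next, I would expand $(x^{k+1})^\tp V x^{k+1}$ using the recursion $x^{k+1}=\mathcal{H}x^k+\mathcal{G}u^k$. By Young's inequality applied to the cross term $2(\mathcal{H}x^k)^\tp V(\mathcal{G}u^k)$ in the $V$-inner product, for any $\eta>0$ one has
\[
(x^{k+1})^\tp V x^{k+1} \le (1+\eta)(x^k)^\tp \mathcal{H}^\tp V\mathcal{H}\, x^k + (1+\eta^{-1})(u^k)^\tp \mathcal{G}^\tp V\mathcal{G}\, u^k.
\]
Substituting $\mathcal{H}^\tp V\mathcal{H}\preceq \rho_1^2 V$ bounds the first summand by $(1+\eta)\rho_1^2 (x^k)^\tp V x^k$, while the second is at most $(1+\eta^{-1})\lambda_{\max}(\mathcal{G}^\tp V\mathcal{G})\norm{u^k}^2$.

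Finally, I would pick $\eta>0$ small enough that $(1+\eta)\rho_1^2\le \rho^2$, which is possible because $\rho_1<\rho$, and set $\gamma=(1+\eta^{-1})\lambda_{\max}(\mathcal{G}^\tp V\mathcal{G})$. This yields the claimed inequality \eqref{eq:iter1}. The only mildly delicate point---and the reason the lemma states $\rho=\sigma(\mathcal{H})+\epsilon$ rather than $\rho=\sigma(\mathcal{H})$---is that strict room is needed to accommodate both the Lyapunov rate $\rho_1^2$ and the $(1+\eta)$ loss from Young's inequality inside the target rate $\rho^2$.
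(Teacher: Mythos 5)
Your proof is correct. It shares the paper's essential ingredient---constructing $V$ from the discrete Lyapunov theorem applied to a scaled version of $\mathcal{H}$---but handles the input term differently. The paper scales by $\rho^{-1}$ directly to get the strict inequality $\mathcal{H}^\tp V\mathcal{H}-\rho^2 V\prec 0$, and then absorbs the cross term $2(x^k)^\tp\mathcal{H}^\tp V\mathcal{G}u^k$ by choosing $\gamma$ large enough that a $2\times 2$ block LMI is negative semidefinite, verified via the Schur complement; the slack needed to dominate the off-diagonal block comes from the strictness of the Lyapunov inequality itself. You instead scale by an intermediate $\rho_1\in(\sigma(\mathcal{H}),\rho)$ and spend the gap between $\rho_1$ and $\rho$ on the $(1+\eta)$ loss from Young's inequality. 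Both uses of the $\epsilon$-slack are legitimate and yield the same conclusion; your route is slightly more elementary (no Schur complement lemma needed, and $\gamma$ is explicit as $(1+\eta^{-1})\lambda_{\max}(\mathcal{G}^\tp V\mathcal{G})$), while the paper's LMI formulation is the more standard idiom in the controls literature and avoids introducing the auxiliary rate $\rho_1$.
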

\begin{proof}
We know $\rho^{-1}\mathcal{H}$ is Schur stable. Based on Theorem 8.4 in \cite{hespanha2009}, there exists a positive definite matrix $V$ such that
\begin{align*}
\rho^{-2}\mathcal{H}^\tp V \mathcal{H}-V< 0,
\end{align*}
where the matrix inequality holds in the negative definite sense. The above condition is actually equivalent to $\mathcal{H}^\tp V \mathcal{H}-\rho^2 V<0$. Choose the matrix $V$ that satisfies $\mathcal{H}^\tp V \mathcal{H}-\rho^2 V<0$. Then there exists a sufficiently large $\gamma$ such that $\mathcal{G}^\tp V \mathcal{G}-\gamma I<0$ and 
$\mathcal{H}^\tp V \mathcal{H}-\rho^2 \mathcal{H}-\mathcal{H}^\tp V \mathcal{G} (\mathcal{G}^\tp V \mathcal{G}-\gamma I)^{-1} \mathcal{G}^\tp V \mathcal{H}<0$. By Schur complement lemma, this is equivalent to
\begin{align*}
\bmat{\mathcal{H}^\tp V \mathcal{H}-\rho^2 V & \mathcal{H}^\tp V \mathcal{G} \\ \mathcal{G}^\tp V \mathcal{H} & \mathcal{G}^\tp V \mathcal{G}}+\bmat{0 & 0 \\ 0 & -\gamma I}< 0.
\end{align*}
Now we left and right multiply the right side of the above matrix inequality with $\bmat{(x^k)^\tp & (u^k)^\tp}$ and $\bmat{(x^k)^\tp & (u^k)^\tp}^\tp$. This leads to
\begin{align*}
\bmat{x^k \\ u^k}^\tp \bmat{\mathcal{H}^\tp V \mathcal{H}-\rho^2 V & \mathcal{H}^\tp V \mathcal{G} \\ \mathcal{G}^\tp V \mathcal{H} & \mathcal{G}^\tp V \mathcal{G}}\bmat{x^k \\ u^k}+\bmat{x^k\\ u^k}^\tp \bmat{0 & 0 \\ 0 & -\gamma I}\bmat{x^k \\ u^k}\le 0.
\end{align*}
One can verify that the first term on the left side of the above inequality is just equal to $(x^{k+1})^\tp V x^{k+1}- \rho^2 (x^k)^\tp V x^k$ as follows
\begin{align*}
(x^{k+1})^\tp V x^{k+1}- \rho^2 (x^k)^\tp V x^k&= (\mathcal{H} x^k+\mathcal{G} u^k)^\tp  V(\mathcal{H} x^k+\mathcal{G} u^k)-\rho^2 (x^k)^\tp V x^k\\
&=\bmat{x^k \\ u^k}^\tp \bmat{\mathcal{H}^\tp V \mathcal{H}-\rho^2 V & \mathcal{H}^\tp V \mathcal{G} \\ \mathcal{G}^\tp V \mathcal{H} & \mathcal{G}^\tp V \mathcal{G}}\bmat{x^k \\ u^k}.
\end{align*}
Therefore, we have $(x^{k+1})^\tp V x^{k+1}- \rho^2 (x^k)^\tp V x^k-\gamma \norm{u^k}^2\le 0$, which is equivalent to \eqref{eq:iter1}.
\end{proof}

Now we are ready to prove Proposition \ref{prop:ltilim}.
 Since $\sigma(\mathcal{H})<1$,  $x^\infty$ can still be well defined as $x^\infty=(I-\mathcal{H})^{-1}\mathcal{G}u$. Notice we have not shown the existence of $\lim_{k\rightarrow \infty} x^k$ at this point. We will show $\lim_{k\rightarrow \infty} x^k$ exists and is equal to $x^\infty$. Applying the relation $(I-\mathcal{H})x^\infty=\mathcal{G}u^\infty$, we still have
\begin{align*}
x^{k+1}-x^\infty=\mathcal{H}(x^k-x^\infty)+\mathcal{G} (u^k-u^\infty).
\end{align*}
By Lemma \ref{lem:help1}, there exists a positive definite matrix $V$ and a positive number $\gamma$ such that
\begin{align}
\label{eq:iter2}
(x^{k+1}-x^\infty)^\tp V (x^{k+1}-x^\infty)\le \rho^2 (x^k-x^\infty)^\tp V (x^k-x^\infty)+\gamma \norm{u^k-u^\infty}^2,
\end{align}
where $\rho=\sigma(\mathcal{H})+\epsilon<1$.
First we show $(x^k-x^\infty)^\tp V (x^k-x^\infty)$ is bounded for all $k$ and then we apply $\limsup$ to the above inequality.
Clearly there exists a constant $U$ such that $\norm{u^k-u^\infty}^2\le U$ for all $k$. Then we have
\begin{align*}
(x^k-x^\infty)^\tp V (x^k-x^\infty)\le \rho^{2k} (x^0-x^\infty)^\tp V (x^0-x^\infty)+ \sum_{t=0}^\infty \rho^{2t} \gamma U\le  (x^0-x^\infty)^\tp V (x^0-x^\infty)+ \frac{ \gamma U}{1-\rho^2}.
\end{align*}
Therefore, $\limsup_{k\rightarrow \infty}(x^k-x^\infty)^\tp V (x^k-x^\infty)$ is finite.
Now we take $\limsup$ on both sides of \eqref{eq:iter2} and will immediately be able to show $\limsup_{k\rightarrow \infty} (x^k-x^\infty)^\tp V (x^k-x^\infty)=0$. Since $V$ is positive definite, we have $x^k\rightarrow x^\infty$, which is the desired conclusion.

Next, we prove Proposition \ref{prop:ltiu}. 
If $u^k=u$ for all $k$, we have $\sum_{t=0}^{k-1} (\mathcal{H})^{k-1-t} \mathcal{G} u^t=\sum_{t=0}^{k-1} (\mathcal{H})^t \mathcal{G} u$.
When $\sigma(\mathcal{H})<1$, we have $(\mathcal{H})^k\rightarrow 0$ and $\sum_{k=0}^\infty (\mathcal{H})^k=(I-\mathcal{H})^{-1}$. Hence we have $x^\infty=\lim_{k\rightarrow \infty} x^k=(I-\mathcal{H})^{-1}\mathcal{G}u$. Clearly, $(I-\mathcal{H})$ is nonsingular due to the fact $\sigma(\mathcal{H})<1$. Therefore, we have $(I-\mathcal{H}) x^\infty=\mathcal{G}u^\infty=\mathcal{G} u$. Now it is straightforward to show
\begin{align*}
x^{k+1}-x^\infty=\mathcal{H} x^k+\mathcal{G} u^k-x^\infty=\mathcal{H}(x^k-x^\infty)+\mathcal{G} u^k-(I-\mathcal{H})x^\infty=\mathcal{H}(x^k-x^\infty),
\end{align*}
which directly leads to the desired conclusion $x^k=x^\infty+(\mathcal{H})^k (x^0-x^\infty)$.

Finally, we will still use \eqref{eq:iter2} to prove Proposition \ref{prop:lti}. It is assumed that the arbitrary small $\epsilon$ is chosen in a way that $\epsilon+\sigma(\mathcal{H})\neq \tilde{\rho}$ since one can always decrease $\epsilon$ by a tiny bit.
Then  iterating \eqref{eq:iter2} leads to
\begin{align*}
(x^k-x^\infty)^\tp V (x^k-x^\infty)&\le \rho^{2k} (x^0-x^\infty)^\tp V (x^0-x^\infty)+\gamma \sum_{t=0}^{k-1} \rho^{2(k-1-t)} \norm{u^t-u^\infty}^2\\
&\le \rho^{2k} (x^0-x^\infty)^\tp V (x^0-x^\infty)+C^2\gamma\sum_{t=0}^{k-1}\rho^{2(k-1-t)}\tilde{\rho}^{2t}\\
&=\rho^{2k} (x^0-x^\infty)^\tp V (x^0-x^\infty)+\left(\frac{C^2\gamma}{\rho^2-\tilde{\rho}^2}\right)(\rho^{2k}-\tilde{\rho}^{2k}).
\end{align*}
Obviously the right side of the above inequality is on the order of $O\left((\max\{\rho, \tilde{\rho}\})^{2k}\right)$. Hence we have
\begin{align*}
\norm{x^k-x^\infty}^2\le \frac{1}{\lambda_{\min}(V)} (x^k-x^\infty)^\tp V (x^k-x^\infty)=O\left((\max\{\rho, \tilde{\rho}\})^{2k}\right),
\end{align*}
which leads to $\norm{x^k-x^\infty}=O\left((\max\{\rho, \tilde{\rho}\})^{k}\right)$. This completes the proof of this proposition.

An interesting thing is that 
when $\rho=\tilde{\rho}$, the convergence rate is actually on the order of $O(k\rho^k)$. Specifically, 
we have
\begin{align*}
(x^k-x^\infty)^\tp V (x^k-x^\infty) &\le \rho^{2k} (x^0-x^\infty)^\tp V (x^0-x^\infty)+C^2\gamma k\rho^{2(k-1)}.
\end{align*}
Of course this rate is always bounded above by $O((\epsilon+\rho)^k)$. In addition, if it happens $\epsilon+\sigma(\mathcal{H})=\tilde{\rho}$, one can always decrease $\epsilon$ by a tiny bit and the convergence rate becomes linear again.

\subsection{A detailed proof for Theorem \ref{thm:IID}}

The underlying probability space is
denoted by $(\Omega, \mathcal{F}, \mathbb{P})$. We denote by
$\mathcal{F}_k$ the $\sigma$-algebra generated by $(z^0, z^1, \ldots,
z^k)$. Clearly, $z^k$ is $\mathcal{F}_k$-adapted and we obtain a
filtered probability space $(\Omega, \mathcal{F}, \{\mathcal{F}_k\},
\mathbb{P})$ on which the stochastic optimization method is defined.

First, we prove $\mu^{k+1}=(I+\alpha \bar{A})\mu^k$. Since $\mathbb{E} b(z^k)=\sum_{i=1}^n p_i b_i=0$, we have
\begin{align*}
\mathbb{E}(\xi^{k+1}\vert \mathcal{F}_{k-1})=\sum_{i=1}^n p_i\left((I+\alpha A_i) \xi^k+\alpha b_i\right)=\left(I+\alpha(\sum_{i=1}^n p_i A_i)\right)\xi^k+\alpha\sum_{i=1}^n p_i b_i=(I+\alpha \bar{A})\xi^k.
\end{align*}
Taking full expectation of the above equation leads to $\mu^{k+1}=(I+\alpha \bar{A}) \mu^k$.

Next, we prove $\mathbb{Q}^{k+1}=\mathbb{Q}^k+\alpha (\bar{A} \mathbb{Q}^k+ \mathbb{Q}^k \bar{A}^\tp)+\alpha^2 \sum_{i=1}^n p_i (A_i \mathbb{Q}^k A_i^\tp +2\sym(A_i \mu^k  b_i^\tp)+b_i b_i^\tp)$. We can use a similar argument. We have
\begin{align*}
&\mathbb{E}(\xi^{k+1} (\xi^{k+1})^\tp\vert \mathcal{F}_{k-1})\\=&\sum_{i=1}^n \left(p_i ((I+\alpha A_i) \xi^k+\alpha b_i)((I+\alpha A_i) \xi^k+\alpha b_i)^\tp\right)\\
=&\sum_{i=1}^n p_i (I+\alpha A_i)\xi^k(\xi^k)^\tp (I+\alpha A_i)^\tp+\sum_{i=1}^n \alpha p_i b_i (\xi^k)^\tp (I+\alpha A_i)^\tp+\sum_{i=1}^n \alpha p_i(I+\alpha A_i)\xi^k b_i^\tp+\alpha^2\sum_{i=1}^n p_i b_i b_i^\tp.
\end{align*}
Taking full expectation and applying the fact $\sum_{i=1}^n p_i b_i=0$ leads to
\begin{align*}
\mathbb{Q}^{k+1}&=\sum_{i=1}^n p_i(I+\alpha A_i) \mathbb{Q}^k (I+\alpha A_i)^\tp+\alpha^2 \sum_{i=1}^n p_i\left(b_i (\mu^k)^\tp A_i^\tp+A_i \mu^k b_i^\tp+b_i b_i^\tp\right)\\
&=\mathbb{Q}^k+\alpha(\bar{A} \mathbb{Q}^k+\mathbb{Q}^k \bar{A}^\tp)+\alpha^2\sum_{i=1}^n p_i \left(A_i \mathbb{Q}^k A_i^\tp +2\sym(A_i \mu^k  b_i^\tp)+b_i b_i^\tp\right).
\end{align*}
This proves the recursive formula for $\mathbb{Q}^k$. Now we can apply the vectorization operation to this formula. For any matrices $A$, $X$, and $B$, we have $\vect(AXB)=(B^\tp \otimes A)\vect(X)$. Hence we can directly show
\begin{align*}
\vect(\mathbb{Q}^{k+1})=&\vect(\mathbb{Q}^k)+\alpha(\vect(\bar{A} \mathbb{Q}^k)+\vect(\mathbb{Q}^k \bar{A}^\tp))+\alpha^2 \sum_{i=1}^n p_i \vect\left(A_i \mathbb{Q}^k A_i^\tp +2\sym(A_i \mu^k  b_i^\tp)+b_i b_i^\tp\right)\\
=&\vect(\mathbb{Q}^k)+\alpha (I\otimes \bar{A}+\bar{A}\otimes I) \vect(\mathbb{Q}^k)+\alpha^2 \left(\sum_{i=1}^n p_i A_i\otimes A_i\right)\vect(\mathbb{Q}^k)\\&+\alpha^2 \left(\sum_{i=1}^n p_i(b_i\otimes A_i+A_i\otimes b_i)\right)\mu^k+\alpha^2\sum_{i=1}^n p_i (b_i\otimes b_i).
\end{align*}
Therefore, we have $\vect(\mathbb{Q}^{k+1})=\mathcal{H}_{22} \vect(Q^k)+\mathcal{H}_{21} \mu^k+\alpha^2 \sum_{i=1}^n p_i(b_i\otimes b_i)$ where $\mathcal{H}_{21}$ and $\mathcal{H}_{22}$ are determined by \eqref{eq:HIID}. Putting this together with $\mu^{k+1}=(I+\alpha \bar{A})\mu^k$ gives us the LTI model in \eqref{eq:LTIIID}. Then notice we have
\begin{align*}
\left(\bmat{\mathcal{H}_{11} & 0 \\ \mathcal{H}_{21} & \mathcal{H}_{22}}\right)^t \bmat{0\\ \alpha^2\sum_{i=1}^n p_i(b_i\otimes b_i)}=\alpha^2\bmat{0\\  (\mathcal{H}_{22})^t (\sum_{i=1}^n p_i(b_i\otimes b_i))}.
\end{align*}

Recall that we have $\mathcal{H}=\bmat{\mathcal{H}_{11} & 0 \\ \mathcal{H}_{21} & \mathcal{H}_{22}}$.
We can apply Proposition 3.6 in \cite{costa2006} to show that $\mathcal{H}$ is Schur stable if and only if $\mathcal{H}_{22}$ is Schur stable. Therefore,
a direct application of Proposition \ref{prop:lti} will lead to \eqref{eq:resIIDF}.
This completes the proof for Theorem \ref{thm:IID}.

\subsection{A detailed proof for Theorem \ref{thm:Markov}}
One may prove this theorem as a corollary of Proposition 3.35 in \cite{costa2006}. For completeness, we add more detailed calculations and present the proof in a self-contained manner.
One can update $q_j^{k+1}$ as
\begin{align*}
q_j^{k+1}&=\sum_{i=1}^n \mathbb{E}\left((H(z^k)\xi^k+\alpha b(z^k)) \textbf{1}_{\{z^k=i\}} \textbf{1}_{\{z^{k+1}=j\}}\right)\\
&=\sum_{i=1}^n \left(H_i\mathbb{E}(\xi^k \textbf{1}_{\{z^k=i\}})\mathbb{P}(\textbf{1}_{\{z^{k+1}=j\}}\vert \xi^k \textbf{1}_{\{z^k=i\}} )+\alpha\mathbb{E}(b(z^k) \textbf{1}_{\{z^k=i\}})\mathbb{P}(\textbf{1}_{\{z^{k+1}=j\}}\vert \xi^k \textbf{1}_{\{z^k=i\}} )\right)\\
&=\sum_{i=1}^n p_{ij}\left( (I+\alpha A_i)q_i^k+\alpha p_i^k b_i \right).
\end{align*}
This leads to the following update rule for $q^k$:
\begin{align*}
\bmat{q_1^{k+1} \\ \vdots \\ q_n^{k+1}}=\bmat{p_{11} (I+\alpha A_1) & \ldots & p_{n1}(I+\alpha A_n)\\ \vdots & \ddots & \vdots\\ p_{1n} (I+\alpha A_1) & \ldots & p_{nn} (I+\alpha A_n)}\bmat{q_1^k \\ \vdots \\ q_n^k}+\alpha\bmat{p_{11}p_1^k I & \ldots & p_{n1} p_n^k I\\ \vdots & \ddots & \vdots \\p_{1n}p_1^k I & \ldots & p_{nn} p_n^k I }\bmat{b_1 \\ \vdots \\ b_n},
\end{align*}
which can be compactly written as $q^{k+1}=(P^\tp \otimes I)\diag(I+\alpha A_i) q^k+\alpha((P^\tp\diag(p_i^k))\otimes I_{n_\xi})b$, where $b$ is the augmented vector
\begin{align*}
b=\bmat{b_1 \\ b_2 \\ \vdots \\ b_n}.
\end{align*}
This proves $q^{k+1}=\mathcal{H}_{11} q^k+\alpha((P^\tp\diag(p_i^k))\otimes I_{n_\xi})b$, where $\mathcal{H}_{11}$ is given by \eqref{eq:HMarkov}.

Next, we perform similar steps to obtain the iterative formula for $Q^k$. One can update $Q_j^{k+1}$ as
\begin{align*}
Q_j^{k+1}&=\sum_{i=1}^n \mathbb{E}\left((H(z^k)\xi^k+\alpha b(z^k)) (H(z^k)\xi^k+\alpha b(z^k))^\tp\textbf{1}_{\{z^k=i\}} \textbf{1}_{\{z^{k+1}=j\}}\right)\\
&=\sum_{i=1}^n \left(H_i\mathbb{E}(\xi^k (\xi^k)^\tp \textbf{1}_{\{z^k=i\}}) H_i^\tp\mathbb{P}(\textbf{1}_{\{z^{k+1}=j\}}\vert \textbf{1}_{\{z^k=i\}} )\right)+\alpha\sum_{i=1}^n \left(H_i\mathbb{E}(\xi^k b(z^k)^\tp \textbf{1}_{\{z^k=i\}}) \mathbb{P}(\textbf{1}_{\{z^{k+1}=j\}}\vert \textbf{1}_{\{z^k=i\}} )\right)\\
&+\alpha\sum_{i=1}^n \left(\mathbb{E}(b(z^k) (\xi^k)^\tp \textbf{1}_{\{z^k=i\}})H_i^\tp \mathbb{P}(\textbf{1}_{\{z^{k+1}=j\}}\vert \textbf{1}_{\{z^k=i\}} )\right)+\alpha^2 \sum_{i=1}^j\mathbb{E}(b(z^k) b(z^k)^\tp\textbf{1}_{\{z^k=i\}})\mathbb{P}(\textbf{1}_{\{z^{k+1}=j\}}\vert  \textbf{1}_{\{z^k=i\}} )\\
&=\sum_{i=1}^n p_{ij}\left( H_i Q_i^k H_i^\tp+2\alpha \sym (H_i q_i^k b_i^\tp)+\alpha^2 p_i^k b_i b_i^\tp\right).
\end{align*}
Now we can apply the vectorization operation to obtain the following equation
\begin{align*}
\vect(Q_j^{k+1})=\sum_{i=1}^n p_{ij} \left((H_i\otimes H_i) \vect(Q_i^k)+\alpha (b_i\otimes H_i+H_i\otimes b_i) q_i^k+\alpha^2 p_i^k b_i\otimes b_i\right),
\end{align*}
which is equivalent to
\begin{align}
\begin{split}
\bmat{\vect(Q_1^{k+1}) \\ \vdots \\ \vect(Q_n^{k+1})}&=\bmat{p_{11} H_1\otimes H_1 & \ldots & p_{n1} H_n\otimes H_n\\ \vdots & \ddots & \vdots\\ p_{1n} H_1\otimes H_1 & \ldots & p_{nn} H_n\otimes H_n}\bmat{\vect(Q_1^k) \\ \vdots \\ \vect(Q_n^k)}+\alpha^2\bmat{p_{11}p_1^k I_{n_\xi} & \ldots & p_{n1} p_n^k I_{n_\xi^2}\\ \vdots & \ddots & \vdots \\p_{1n}p_1^k I_{n_\xi^2} & \ldots & p_{nn} p_n^k I_{n_\xi^2} }\bmat{b_1\otimes b_1 \\ \vdots \\ b_n\otimes b_n}\\
&+\alpha\bmat{p_{11} (b_1\otimes H_1+H_1\otimes b_1) & \ldots & p_{n1} (b_n\otimes H_n+H_n\otimes b_n)\\ \vdots & \ddots & \vdots \\p_{1n} (b_1\otimes H_1+H_1\otimes b_1) & \ldots & p_{nn} (b_n\otimes H_n+H_n\otimes b_n) }\bmat{q_1^k \\ \vdots \\ q_n^k}.
\end{split}
\end{align}
We can compactly rewrite the above equation as $\vect(Q^{k+1})=\mathcal{H}_{22} \vect(Q^k)+\mathcal{H}_{21} q^k+\alpha^2 \diag(p_i^k)(P^\tp \otimes I_{n_\xi^2})\hat{B}$, where $\mathcal{H}_{22}$ and $\mathcal{H}_{21}$ are given by \eqref{eq:HMarkov}, and  $\hat{B}=\bmat{(b_1\otimes b_1)^\tp & \ldots & (b_n\otimes b_n)^\tp}^\tp$.
Putting the recursion formulas for $q^k$ and $\vect(Q^k)$ together leads to the desired state-space model \eqref{eq:MarkovSys}. The rest of the theorem statement follows from direct applications of Equation \eqref{eq:res1}.

\section{Details for perturbation analysis under the Markov assumption}

The perturbation analysis in Section \ref{sec:Markov} relies on a few technical lemmas from matrix perturbation theory. We provide more details here.
We will use the following fact.
\begin{prop}
\label{prop:pert}
Suppose $\lambda$ is a simple eigenvalue of $A$ with left eigenvector $y$ and right eigenvector $x$. Suppose $B$ and $A\otimes I_m$ have the same dimension.
Let $c$ be an eigenvalue of the $m\times m$ matrix $(y \otimes I_m) B (x\otimes I_m)$. Then the matrix $(A\otimes I_m)+\alpha B$ has an eigenvalue yielding the first-order expansion $\lambda+ c \alpha +O(\alpha^2)$ for small $\alpha$.
\end{prop}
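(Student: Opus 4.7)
The plan is to reduce this to the classical first-order perturbation theorem for semisimple eigenvalues. Throughout I assume the standard biorthogonal normalization $yx = 1$, which is implicit in the statement's compressed matrix (otherwise one would divide by $yx$). First, I would exploit the Kronecker product structure. Since $\lambda$ is a simple eigenvalue of $A$ with right eigenvector $x$ and left eigenvector $y$, the mixed-product identity $(A \otimes I_m)(x \otimes v) = (Ax) \otimes v = \lambda(x \otimes v)$ for every $v \in \R^m$ shows that $\lambda$ is a \emph{semisimple} eigenvalue of $A \otimes I_m$ whose algebraic and geometric multiplicities both equal $m$. Its right invariant subspace is spanned by the columns of $X := x \otimes I_m$, and its left invariant subspace by the rows of $Y := y \otimes I_m$ (viewing $y$ as a row vector). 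The Kronecker identity yields $YX = (y \otimes I_m)(x \otimes I_m) = (yx)\otimes I_m = I_m$, so $(X,Y)$ is a biorthogonal pair.

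Second, I would invoke the standard first-order perturbation result for semisimple eigenvalues (e.g., Section II.2 of \cite{kato2013perturbation}, or the semisimple specialization of Theorem 2.1 in \cite{moro1997lidskii}, both already cited in the paper): if $M_0$ has a semisimple eigenvalue $\lambda$ of multiplicity $m$ with biorthogonal invariant bases $X,Y$ satisfying $YX=I_m$, then for small $\alpha$ the perturbed matrix $M_0 + \alpha B$ has exactly $m$ eigenvalues branching from $\lambda$, and they admit first-order expansions $\lambda + c_i\alpha + O(\alpha^2)$ where $c_1,\ldots,c_m$ are the eigenvalues of the compressed perturbation $YBX$.

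Third, applying this with $M_0 = A \otimes I_m$, the compressed matrix becomes $YBX = (y \otimes I_m)\,B\,(x \otimes I_m)$, which is precisely the $m\times m$ matrix in the statement. Any eigenvalue $c$ of this matrix therefore yields an eigenvalue $\lambda + c\alpha + O(\alpha^2)$ of $(A \otimes I_m) + \alpha B$, proving the proposition.

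The main obstacle is just verifying the semisimplicity of $\lambda$ as an eigenvalue of $A \otimes I_m$ and the biorthogonality of $X,Y$; the Kronecker mixed-product identity settles both cleanly, so the bulk of the work is bookkeeping. If a self-contained argument were preferred to citing the semisimple perturbation theorem, I would block-diagonalize $A$ around $\lambda$ via a similarity $P^{-1}AP = \diag(\lambda,\widetilde A)$ with $\lambda \notin \mathrm{spec}(\widetilde A)$, conjugate $(A \otimes I_m)+\alpha B$ by $P\otimes I_m$ to obtain a $2\times 2$ block matrix whose leading $m\times m$ block is $\lambda I_m + \alpha B_{11}$, and use a Schur-complement argument: the off-diagonal perturbations contribute only at order $\alpha^2$ because $\widetilde A \otimes I_m - \lambda I$ is invertible at $\alpha=0$, so the eigenvalues near $\lambda$ match those of $\lambda I_m + \alpha B_{11}$ up to $O(\alpha^2)$, and one checks that $B_{11}$ coincides with $YBX$ under the similarity.
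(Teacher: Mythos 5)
Your proof is correct and takes essentially the same route as the paper, whose entire proof is the citation ``this is a special case of Theorem 2.1 in \cite{moro1997lidskii}; see the remark placed after that theorem'' --- i.e., exactly the semisimple-eigenvalue specialization you invoke. You go further by actually verifying the hypotheses the paper leaves implicit (semisimplicity of $\lambda$ as an eigenvalue of $A\otimes I_m$ with multiplicity $m$, and the biorthogonality $(y\otimes I_m)(x\otimes I_m)=(yx)I_m=I_m$ under the normalization $yx=1$), which is a useful completion rather than a different approach.
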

The above proposition is a special case of Theorem 2.1 in \cite{moro1997lidskii}. See the remark placed after Theorem 2.1 in \cite{moro1997lidskii} for explanations.
Now we can directly apply the above proposition to analyze the spectral radius of $\mathcal{H}_{11}$ and $\mathcal{H}_{22}$. First recall that
$\mathcal{H}_{11}=(P^\tp \otimes I_{n_\xi}) \diag(I_{n_\xi}+\alpha A_i)=P^\tp \otimes  I_{n_\xi}+\alpha (P^\tp\otimes I_{n_\xi})\diag(A_i)$.
Based on the ergodicity assumption on $\{z^k\}$, $1$ is a simple eigenvalue of $P^\tp$ with left eigenvector $y=\bmat{1 & 1 & \ldots & 1}$ and right eigenvector $p^\infty$ which is  the unique stationary distribution of $\{z^k\}$. 
Since we have $(y\otimes I_{n_\xi}) (P^\tp \otimes I_{n_\xi})\diag(A_i) (p^\infty \otimes I_{n_\xi})=\sum_{i=1}^n p_i^\infty A_i=\bar{A}$,
we can directly apply the above proposition to show
\begin{align}
\lambda_{\max}(\mathcal{H}_{11})=1+\lambda_{\max\real}(\bar{A})\alpha+O(\alpha^2).
\end{align}
Therefore, we have
\begin{align*}
\sigma(\mathcal{H}_{11})=\sqrt{(1+\alpha\real(\lambda_{\max\real}(\bar{A})))^2+(\imag(\lambda_{\max\real}(\bar{A})))^2}\approx 1+\real(\lambda_{\max\real}(\bar{A}))\alpha+O(\alpha^2).
\end{align*}

Next, we do a similar perturbation analysis to show $\mathcal{H}_{22}=1+2\real(\lambda_{\max\real}(\bar{A}))\alpha+O(\alpha^2)$.
Recall that we have 
\begin{align*}
\mathcal{H}_{22}&= (P^\tp \otimes I_{n_\xi^2}) \diag((I_{n_\xi}+\alpha A_i)\otimes (I_{n_\xi}+\alpha A_i))\\&=P^\tp \otimes I_{n_\xi^2}+\alpha ((P^\tp\otimes I_{n_\xi^2})\diag(A_i\otimes I_{n_\xi}+I_{n_\xi}\otimes A_i))+O(\alpha^2).
\end{align*}
Since we have $(y\otimes I_{n_\xi^2}) (P^\tp\otimes I_{n_\xi^2})\diag(A_i\otimes I_{n_\xi}+I_{n_\xi}\otimes A_i) (p^\infty \otimes I_{n_\xi^2})=\bar{A}\otimes I_{n_\xi}+I_{n_\xi}\otimes \bar{A}$,
we can directly apply the above matrix perturbation proposition to show
\begin{align}
\lambda_{\max}(\mathcal{H}_{22})=1+2\lambda_{\max\real}(\bar{A})\alpha+O(\alpha^2),
\end{align}
which leads to the desired first order expansion of $\sigma(\mathcal{H}_{22})$.
Another fact that we used in the above argument is that all the eigenvalues of $\bar{A}\otimes I_{n_\xi}+I_{n_\xi}\otimes \bar{A}$ are in the form of a sum of two eigenvalues of $\bar{A}$.

\paragraph{A remark on the IID case.}
For the IID case, we have $\mathcal{H}_{11}=I+\alpha \bar{A}$. Hence the eigenvalues of $\mathcal{H}_{22}$ are in the form of $1+\alpha\lambda$ where $\lambda$ is an eigenvalue of $\bar{A}$. So there is no need to even perform a perturbation analysis here. We directly have 
\begin{align*}
\sigma(\mathcal{H}_{11})=\sqrt{(1+\alpha\real(\lambda_{\max\real}(\bar{A})))^2+(\imag(\lambda_{\max\real}(\bar{A})))^2}\approx 1+\real(\lambda_{\max\real}(\bar{A}))\alpha+O(\alpha^2).
\end{align*}
To analyze $\sigma(\mathcal{H}_{22})$, first recall that we have $\mathcal{H}_{22}=I_{n_\xi^2}+\alpha (I\otimes \bar{A}+\bar{A}\otimes I)+\alpha^2 \sum_{i=1}^n p_i( A_i\otimes A_i)$. If we assume $\lambda_{\max}(I_{n_\xi^2}+\alpha (I\otimes \bar{A}+\bar{A}\otimes I))$ is a semisimple eigenvalue, then we can apply Proposition \ref{prop:pert} to obtain $\lambda_{\max}(\mathcal{H}_{22})=1+2\lambda_{\max\real}(\bar{A})+O(\alpha^2)$.

\section{Connections to existing finite sample bounds on mean square errors}

Most existing finite sample bounds for TD learning with a constant learning rate have the following form:
\begin{align}
\label{eq:sb1}
\mathbb{E} \norm{\xi^k}^2 \le C_0 \rho^{2k} +C_1,
\end{align}
where $C_0$ is a constant, $\rho^2$ is the convergence rate, and $C_1$ quantifies the final error level. Typically one proves $\rho^2=1-c\alpha+O(\alpha^2)$ for some $c$, and $C_1=O(\alpha)$. One most relevant result of this nature for the Markov noise model was presented as Theorem 7 in \cite{srikant2019finite}. 
Our result justifies the tightness of the result in \cite{srikant2019finite} for the following reasons. 
\begin{itemize}
\item In \cite{srikant2019finite}, the constant $c$ in the rate $\rho^2$ is a constant determined by $\bar{A}$. Using our perturbation analysis, we can see eventually $c$ is going to be determined by the real part of $\lambda_{\max\real}(\bar{A})$. Actually one could modify the argument in \cite{srikant2019finite} to match the constant $c$ with our perturbation analysis result by choosing a slightly better Lyapunov function based on the solution of an linear matrix inequality $\bar{A}^\tp V+V \bar{A}+2\rho V\preceq 0$.
\item In \cite{srikant2019finite}, the constant $C_1$ is at the order of $O(\alpha)$ which matches the perturbation analysis result obtained in our paper. It is possible that the constant $C_1$ may be improved to match the steady state mean square error $\lim_{k\rightarrow \infty}\trace(\mathbb{Q}^k)$ obtained by our perturbation analysis, although we have not pursued such an analysis.
\item In \cite{srikant2019finite}, the rate $\rho$ does not depend on the mixing time property. This is consistent with our theory. Based on our theory, as $\alpha$ gets smaller, the rate $\rho$ becomes independent of the mixing rate $\tilde{\rho}$, although the constant $C_0$ still has some dependence on $\tilde{\rho}$.
\item Our results further show that $\mathbb{E} \norm{\xi^k}^2$ converges to an exact limit at a linear rate. Hence both upper and lower bounds for the mean square TD error are simultaneously provided.
\end{itemize}

It is worth mentioning that the bounds in the form of \eqref{eq:sb1} capture the behaviors of TD learning quite well for small $\alpha$, but can be conservative for large $\alpha$. Our formulas are exact for all $\alpha$. The generality comes at the price of loosing some interpretability for the large learning rate region. How to interpret $\sigma(\mathcal{H}_{22})$ for larger $\alpha$ in a better way remains unclear at this moment.

\section{More discussions on jump system formulations for variants of TD(0)}

Finally, we present some extra details for the jump system formulations of several TD learning algorithms other than TD(0). 
Specifically, all the methods that can be analyzed using the ODE method has the form $\xi^{k+1}=(I+\alpha A(z^k)) \xi^k+\alpha b(z^k)$.
Then taking expectation of $A(z^k)$ and $b(z^k)$ under the stationary distribution and making $\alpha$ arbitrarily small leads to the ODE $\dot{\xi}=\bar{A} \xi$. As commented in Section \ref{sec:JumpFor}, the linear stochastic approximation scheme $\xi^{k+1}=(I+\alpha A(z^k)) \xi^k+\alpha b(z^k)$ is just a MJLS. Now we give detailed references for this type of formulations for various TD learning algorithms.  The detailed linear stochastic approximation form for GTD is given in Section 4 of \cite{sutton2008convergent}.
The detailed linear stochastic approximation form for GTD2 is given in Section 5 of \cite{sutton2009fast}.  TDC yields a similar formulation.
The double temporal difference (DTD) learning method and the average temporal difference (ATD) learning method are proposed in \cite{Niao2019}.  The ODE formulations for both DTD and ATD are presented in the supplementary material of \cite{Niao2019}, yielding straightforward jump system formulations.

It is also possible to model two time-scale methods or off-policy TD learning using the general jump system model \eqref{eq:jump1}. One needs to slightly modify the definitions of $\{H_i\}$ and $\{G_i\}$. Then one can immediately apply the LTI model \eqref{eq:jumpGeneral} to obtain closed-form formulas for the mean square error of these methods.

\end{document}